\algnewcommand{\LineComment}[1]{\State \(\triangleright\) #1}
\newcommand{\multiline}[1]{%
  \begin{tabularx}{0.95\dimexpr\linewidth-\ALG@thistlm}[t]{@{}X@{}}
    #1
  \end{tabularx}
}
\DeclarePairedDelimiter\ceil{\lceil}{\rceil}
\newtheorem{lemma}{Lemma}
\newtheorem{theorem}[lemma]{Theorem}
\theoremstyle{definition}
\theoremstyle{remark}
\begin{document}

%

%

\twocolumn[

\aistatstitle{Risk-Aware Algorithms for Combinatorial Semi-Bandits} 

\aistatsauthor{Shaarad A. R. \And Ambedkar Dukkipati}

\aistatsaddress{} 
]

\begin{abstract}
In this paper, we study the stochastic combinatorial multi-armed bandit problem under semi-bandit feedback. While much work has been done on algorithms that optimize the expected reward for linear as well as some general reward functions, we study a variant of the problem, where the objective is to be risk-aware. More specifically, we consider the problem of maximizing the Conditional Value-at-Risk ($\mathsf{CVaR}$), a risk measure that takes into account only the worst-case rewards. We propose new algorithms that maximize the $\mathsf{CVaR}$ of the rewards obtained from the super arms of the combinatorial bandit for the two cases of Gaussian and bounded arm rewards. We further analyze these algorithms and provide regret bounds. We believe that our results provide the first theoretical insights into combinatorial semi-bandit problems in the risk-aware case.   \end{abstract}

\section{INTRODUCTION}


The multi-armed bandit framework provides a platform to study sequential decision making problems. Here, an agent has to adaptively choose among decisions available at each time, taking into account the feedback received till that time as well as a plan for future decisions. Such a framework can be used to formalize and solve problems from a wide variety of fields.
In a stochastic multi-armed bandit, choosing an arm gives the agent a reward sampled from a probability distribution corresponding to that arm. The objective of the agent is to maximize the expected rewards obtained over the entire time horizon. 



While  in the standard setting, at each time step the agent has to choose one arm (or take only one decision), many practical problems require multiple choices to be made at each time step. Such a situation arises in problems such as influence maximization~\citep{Chen:CombinatorialMABGeneralFramework}, cognitive radio networks \citep{Gai:LearningMultiUserChannelAllocations}, the stochastic version of problems such as the shortest path problem \citep{Gai:CombinatorialNetworkOptim} etc. In these problems, multiple arms can be chosen by the agent, possibly with some constraints on the combinations of arms that can be chosen. Such problems have been studied as the multi-armed bandit with multiple plays \citep{Anantharam:MultiplePlay} and the combinatorial bandit \citep{Gai:CombinatorialNetworkOptim}. In this paper, we consider the stochastic combinatorial bandit where each arm has a corresponding fixed reward distribution.

When a set of arms is chosen at some time instant, if only the total or combined reward from the chosen set of arms is revealed to the agent, the setting is called bandit feedback. In this paper, we study the problem under semi-bandit feedback, in which the rewards from all the chosen arms are revealed to the agent.

In the standard bandit formulation, the objective of the agent is to maximize the expected sum of all the rewards obtained by the agent over the entire time horizon. In this setting, each arm is judged solely based on the expectation of its reward distribution. However, there may exist situations in which we wish to consider the entire reward distribution instead of just its expectation. For example, we might wish to make decisions that give a consistent reward, or a good worst-case reward. This lead to the development of various risk-awake strategies or algorithms. 

While there are many risk measures that have been proposed, in this paper, we consider a popular and commonly used risk measure called Conditional Value-at-Risk ($\mathsf{CVaR}$). This is  parameterized by some probability level $\alpha \in (0,1)$. Intuitively, the $\mathsf{CVaR}$ of a probability distribution is the expected value of a random variable sampled from the distribution, conditioned on the sample being atmost the $\alpha-$quantile of the distribution. Thus, the decision with the best $\mathsf{CVaR}$ is the decision with the best expected reward in the worst case, where the ``worst case" is quantified by the parameter $\alpha$.

We propose and analyze algorithms to tackle a stochastic multi-armed bandit with semi-bandit feedback, where the agent suffers regret whenever a subset of arms (called a super arm) is chosen that does not have the maximum possible $\mathsf{CVaR}$ among all allowed super arms. The $\mathsf{CVaR}$ considered here is that of the reward distributions of the super arms, where the reward from a super arm is the sum of the rewards from each of its constituent arms. We study this problem for the two cases in which the individual arm rewards are (i) normally distributed, and (ii) bounded.

\section{RELATED WORK}
The multi-armed bandit problem with multiple plays has been first studied by \cite{Anantharam:MultiplePlay}, who provided an algorithm and a lower bound on the regret for the setting in which the reward distributions of the arms are members of a parameterized family of probability distributions. An optimal Thompson sampling algorithm for solving this problem has been proposed \citep{Koniyama:OptimalRegretAnalysisThompsonMABMultiplePlays} for the case of Bernoulli arm rewards.

\cite{Gai:LearningMultiUserChannelAllocations} formulated the combinatorial bandit problem and proposed an algorithm for the problem of optimally allocating channels to users in a cognitive radio network. This work was generalized to the LLR (Learning with Linear Rewards) algorithm \citep{Gai:CombinatorialNetworkOptim} for combinatorial bandits in which each super arm is characterized by a set of coefficients corresponding to the arms of the bandit, and the reward from a super arm is the linear combination of the rewards from the individual arms.

This has been extended \citep{Chen:CombinatorialMABGeneralFramework} to more general non-linear reward functions that are monotonic and smooth. However, their algorithm CUCB (Combinatorial Upper Confidence Bound) only applies to those reward functions that satisfy the property that the expected value of the reward obtained from choosing a super arm depends only on the expected values of the reward distributions of the individual arms. In other words, the expected reward from a super arm should be fully characterized by the means of the individual arm reward distributions. Because of this assumption, the problem can be solved by constructing upper confidence bounds for each of the arm reward means and using these values to determine the optimal super arm.

The above assumption is removed by \cite{Chen:CombMABGeneralRewardFuncs} who proposed SDCB (Stochastically Dominant Confidence Bound) that can also deal with bounded monotone reward functions that depend on the entire probability distributions of the arm rewards and not just the means. SDCB constructs a new probability distribution for each arm that acts as a stochastically dominant confidence bound for the actual unknown arm reward distribution, and uses these stochastically dominant distributions to determine the optimal super arm. We adapt this technique to tackle the case of general bounded rewards.

\cite{Kveton:TightRegretForCMAB} provided tight regret bounds for the case of bounded linear rewards, which are further improved by ESCB (Efficient Sampling for Combinatorial Bandits) \citep{Combes:CombBanditsRevisited} for the special case of Bernoulli rewards. For general bounded rewards, \cite{Merlis:BatchSizeIndepRegretForCMAB} proposed an algorithm BC-UCB (Bernstein Combinatorial Upper Confidence Bound) that constructs confidence bounds for the arms that also use the empirical variance of the observed samples. Their regret bound is independent of the size of the super arms, and instead depends on the smoothness parameters of the non-linear reward function.

\cite{Merlis:TightLowerBoundForCMABs} provided tight lower bounds (up to logarithmic factors) for the setting in which the arm rewards are bounded and the super arm reward is a monotone smooth function of the expected arm rewards, and also for linear rewards.

The combinatorial bandit problem has also been tackled using a Thompson sampling approach by \cite{WangChen:2020:ThompsonSamplingForCombinatorialSemiBandits}, whose algorithm achieves the same theoretical regret as the UCB-based algorithm CUCB \citep{Chen:CombinatorialMABGeneralFramework}, and matches the regret lower bound \citep{Kveton:TightRegretForCMAB} for linear reward functions.

Risk-awareness has first been studied for multi-armed bandits under the mean-variance criterion \citep{Sani:RiskAversionInMABs} and the $\mathsf{MIN}$ and $\mathsf{CVaR}$ criteria \citep{Galichet:ExploVsSafetyRiskAwareMABs} for bounded rewards. An algorithm with provable guarantees on the $\mathsf{CVaR}$ regret was proposed for bounded rewards by \cite{Galichet:ContribsToMABsRiskAwareness}.

\cite{BhatPrashanth:ConcentrationOfRiskMeasuresWassersteinDistanceApproach} provide concentration results for $\mathsf{CVaR}$ estimation, and thereby a UCB-like algorithm for optimizing $\mathsf{CVaR}$ regret, for the case of subgaussian arm rewards. An entirely distribution oblivious algorithm that works with any arm reward distribution has been proposed and analyzed by \cite{Jagannathan:DistributionObliviousRiskAwareAlgsForMABsWithUnboundedRewards}. Their algorithms require very mild assumptions on the rewards and work for unbounded rewards and rewards with heavy tails.

An algorithm, U-UCB, that can deal with general risk metrics including $\mathsf{CVaR}$ has been proposed by \cite{Cassel:AGeneralApproachToMABsUnderRiskCriteria}. They have also derived performance guarantees for their algorithm \citep{Cassel:AGeneralFrameworkForBanditProblemsBeyondCumulativeObjectives} with respect to a stronger form of regret than in other works that deal with $\mathsf{CVaR}$. 
To the best of our knowledge, our work is the first work that develops and studies risk aware algorithms in a combinatorial setting. 

\section{PROBLEM FORMULATION}
Consider a bandit with $K$ arms and time horizon $T$. Let $[K] = \{1, 2, \dots, K\}$ be the set of all arms and $2^{[K]}$ be its power set. Each element $a \in 2^{[K]}$ is a set of arms and is called a super arm. At each time step, the agent has to pick a super arm from a given fixed set of super arms $\mathcal{A} \subseteq 2^{[K]}$. In other words, $\mathcal{A}$ defines the constraints on the possible combinations of arms that can be chosen by the agent. Let $L$ be the maximum number of arms that can be simultaneously selected, i.e, $L = {\displaystyle \max_{a \in \mathcal{A}}} |a|$.

Each arm $i \in [K]$ has an associated probability distribution $\mathcal{D}_i$ for its reward. When a super arm $a \in \mathcal{A}$ is selected, a reward is sampled from each arm $i \in a$ and the agent receives the sum of these sampled rewards.

Note on notation: We use the symbol $\oplus$ and $\bigoplus$ to denote the binary and $n$-ary sum of probability distributions, and so denote the distribution of the reward of super arm $a$ as $\mathcal{D}_a$ or $\mathcal{D}_{\bigoplus_{i \in a} \mathcal{D}_i}$. Depending on the context, for any distribution $\mathcal{D}$ with some subscript/superscript, we denote its corresponding cumulative distribution function and probability density/mass function as $F$ and $f$ respectively with the same subscript/superscript.

For any random variable $X \sim D$ with probability distribution $\mathcal{D}$ and cumulative distribution function $F_X$, the Conditional Value at Risk, for risk level $\alpha \in (0,1)$, is defined as
$$
\mathsf{CVaR}_\alpha(X) = \frac{1}{\alpha} \int_0^\alpha \mathsf{VaR}_p(X) dp,
$$
where $\mathsf{VaR}_p$ is the Value-at-Risk at risk level $p$. Value-at-Risk is defined as
$$
\mathsf{VaR}_\alpha(X) = x_\alpha = \inf \{ x \in \text{Support}(\mathcal{D}) : F_X(x) \geq \alpha  \}.
$$
Intuitively, for continuous probability distributions, $\mathsf{VaR}_\alpha$ is just the $\alpha$ quantile of distribution, and $\mathsf{CVaR}_\alpha$ is the expected value of the random variable conditioned on its value being at most $\mathsf{VaR}_\alpha$. It is the same for discrete distributions, except for an additional term that takes into account and subtracts the extra probability mass at the $\mathsf{VaR}$ that might go beyond the cutoff $\alpha$.

The goal of the agent is to choose the super arm with the best Conditional Value-at-Risk at every time step. Whenever a super arm with a suboptimal $\mathsf{CVaR}$ is chosen, the agent suffers a regret. So, the goal is to minimize this $\mathsf{CVaR}$ regret, defined by
$$
{\mathcal{R}}_{\mathsf{CVaR}_\alpha}(T) = \mathbb{E} \left[ \sum_{t=1}^T \left( \mathsf{CVaR}_\alpha(a^*) - \mathsf{CVaR}_\alpha(a_t) \right) \right],
$$
where $\mathsf{CVaR}_\alpha(a)$ is the $\mathsf{CVaR}$ of the reward distribution $\mathcal{D}_a$ of super arm $a$, $a_t$ is the super arm chosen by the agent at time $t$, and $a^*$ is the super arm with the highest $\mathsf{CVaR}$. The above regret can also be written as
$$
{\mathcal{R}}_{\mathsf{CVaR}_\alpha}(T) = \sum_{a \in \mathcal{A}} \mathbb{E} \left[ T_a(T) \right] \Delta_a,
$$
where $T_a(T)$ is the number of times super arm $a$ has been chosen till time $T$, and
$$
\Delta_a = \mathsf{CVaR}_\alpha(a^*) - \mathsf{CVaR}_\alpha(a)
$$
is the $\mathsf{CVaR}$ gap of super arm $a$. Further, let $\Delta_{min}$ and $\Delta_{max}$ denote the minimum and maximum nonzero $\Delta_a$ among all arms $a \in \mathcal{A}$.

\section{ALGORITHMS}
The naive way to solve a combinatorial bandit problem is to treat each super arm as an arm, thereby reducing the problem to a standard multi-armed bandit problem. 
The disadvantage of this approach is that the regret upper bound depends linearly on the number of super arms, which itself may depend exponentially on the number of arms. This is because such a naive approach ignores the dependence between the super arms. It has been shown that an algorithm and regret analysis that takes into account this dependence can yield a regret upper bound that is only polynomial in the number of arms~\citep{Gai:CombinatorialNetworkOptim}.

Thus, the crux of such an algorithm is the way in which samples from individual arms are used to construct estimates of the relevant properties of the super arm reward distributions. This depends on the nature of the arm reward distributions. In this paper, we propose risk-aware algorithm in two cases: (i) normally distributed, and (ii) bounded arm rewards.

\subsection{Gaussian rewards}
Assume that each arm $i \in [K]$ has Gaussian reward distribution with mean $\mu_i$ and variance $\sigma_i^2$, and that the rewards of all arms are independent. This implies a super arm $a \in \mathcal{A}$ has a Gaussian reward distribution with mean $\mu_a = \sum_{i \in a} \mu_i$ and variance $\sigma_a^2 = \sum_{i \in a} \sigma_i^2$. The $\mathsf{CVaR}$ of this super arm is then given by
$$
\mathsf{CVaR}_\alpha(\mathcal{D}_a) = \mu_a - \frac{\sigma_a}{\alpha} \varphi \left( \Phi^{-1}(\alpha) \right),
$$
where $\varphi$ and $\Phi$ are the probability density function and cumulative distribution function of the standard normal distribution. Thus, to estimate the $\mathsf{CVaR}$ of a super arm, it is sufficient to estimate the mean and standard deviation of the super arm.

Since both the mean and the variance are unknown, for obtaining suitable finite-sample guarantees on our estimates, we impose an additional assumption that the agent has access to prior information on an upper and lower bound on the variance of any arm. We assume that there exist known constants $M, N > 0$ such that $N^2 < \sigma_i^2 < M^2$ for every arm $i \in [K]$.

This allows us to construct an upper confidence bound on the mean of each arm, and a lower confidence bound on the variance of each arm, in turn allowing us to use these to create upper confidence bounds on the $\mathsf{CVaR}$ of all super arms. During each time step, we select the super arm with the highest $\mathsf{CVaR}$ upper confidence bound, as detailed in Algorithm \ref{alg:cucb_gaussian}. An upper bound for the $\mathsf{CVaR}$ regret suffered by this algorithm is given in the following theorem.

\begin{algorithm}
\begin{algorithmic}
\Require $M$, $N$
\State Pick each super arm so that all the arms have atleast two rewards obtained
\For{each time $t$ till $T$}
	\State\multiline{Estimate sample mean $\hat{\mu}_i$ and sample variance $\hat{s}^2_i$ for each arm $i$}
	\LineComment{\multiline{Construct corresponding confidence bounds
	$\tilde{\mu}_i$ and $\tilde{s}_i^2$}}
	\State\multiline{$m_i(t-1) \leftarrow$ Number of rewards obtained from arm $i$ till time $t-1$}
    \State $\tilde{\mu}_i \leftarrow \hat{\mu}_i + 2M \sqrt{\frac{(L+1) \log (t-1)}{m_i(t-1)}}$
    \State $D_{i,t-1} \leftarrow M^2 \sqrt{\frac{2(L+1)\log (t-1)}{m_i(t-1) - 1} + \frac{4(L+1)^2 (\log (t-1))^2}{(m_i(t-1) - 1)^2}}$
    \State $\tilde{s}^2_i \leftarrow \max \left\{ \left( \hat{s}_{i, m_i(t-1)}^2 - D_{i, t-1} \right), N^2 \right\}$
	\LineComment{\multiline{Construct upper confidence bound for $\mathsf{CVaR}$ for each super arm $a$, as}}
	\State $\sum_{i \in a} \tilde{\mu}_i - \frac{\sqrt{\sum_{i \in a} \tilde{s}^2_i }}{\alpha} \varphi(\Phi^{-1}(\alpha))$
	\State Pick the super arm with best UCB.
\EndFor
\end{algorithmic}
\caption{Algorithm $\mathsf{CVaR}_\alpha$-CUCB-G}
\label{alg:cucb_gaussian}
\end{algorithm}

\begin{theorem}
\begin{align*}
{\mathcal{R}}&_{\mathsf{CVaR}_\alpha}(T) \lesssim \left( 2K + \frac{2}{3} \pi^2 LK \right) \Delta_{max} \\
& + \frac{4M^2 \sqrt{L}(L+1) K \log T \Delta_{max}}{\Delta_{min}} \max\bigg\{ \frac{16 L\sqrt{L}}{\Delta_{min}},\\
& \qquad \qquad \qquad \qquad \qquad \qquad \qquad \frac{3 \varphi (\Phi^{-1}(\alpha))}{\alpha N} \bigg\}.
\end{align*}
\label{thm:gaussian}
\end{theorem}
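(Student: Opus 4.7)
The plan is to adapt the standard CUCB analysis to jointly control the Gaussian mean and chi--squared variance deviations that determine the CVaR UCB. Fix $t \geq 2$ and define the good event $\mathcal{G}_t$ on which, for every arm $i$, one has $\tilde\mu_i \geq \mu_i$ and $\tilde s_i^2 \leq \sigma_i^2$. Since $\sigma_i < M$, a Gaussian tail bound applied to $\hat\mu_i - \mu_i$ together with a Laurent--Massart chi--squared tail bound applied to $(m_i(t-1)-1)\hat s_i^2/\sigma_i^2$, each calibrated to the algorithm's bonus $2M\sqrt{(L+1)\log(t-1)/m_i(t-1)}$ and $D_{i,t-1}$ respectively, followed by a union bound over arms and over $m_i(t-1) \in \{2, \dots, t-1\}$, gives $\mathbb{P}(\bar{\mathcal{G}}_t) = O(K t^{-2L-1})$. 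Summing the contribution $\Delta_{max}$ on $\bar{\mathcal G}_t$ over $t$ gives the first line $(2K + \tfrac{2}{3}\pi^2 LK)\Delta_{max}$, where the $2K$ absorbs the initialization phase during which each arm is observed at least twice.

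\textbf{Under the good event.} On $\mathcal{G}_t$, the UCB constructed by Algorithm \ref{alg:cucb_gaussian} satisfies $\widetilde{\mathsf{CVaR}}_\alpha(a) \geq \mathsf{CVaR}_\alpha(a)$ for every $a$. Combining this with the maximality of $a_t$ gives $\widetilde{\mathsf{CVaR}}_\alpha(a_t) \geq \mathsf{CVaR}_\alpha(a^*) = \mathsf{CVaR}_\alpha(a_t) + \Delta_{a_t}$, and decomposing
$$\Delta_{a_t} \leq \sum_{i \in a_t}(\tilde\mu_i - \mu_i) + \frac{\varphi(\Phi^{-1}(\alpha))}{\alpha}\bigg(\sigma_{a_t} - \sqrt{\sum_{i \in a_t}\tilde s_i^2}\bigg),$$
both pieces are non-negative. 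The first is bounded by $4M \sum_{i \in a_t} \sqrt{(L+1)\log(t-1)/m_i(t-1)}$. For the second, use $\sigma_i^2 - \tilde s_i^2 \leq 2 D_{i,t-1}$, which holds whether or not the $N^2$ floor is active, sum over $i \in a_t$, and divide the resulting numerator $\sigma_{a_t}^2 - \sum \tilde s_i^2 \leq 2\sum_i D_{i,t-1}$ by $\sigma_{a_t} + \sqrt{\sum \tilde s_i^2} \geq 2N\sqrt{|a_t|}$. Letting $m^*(t-1) = \min_{i \in a_t} m_i(t-1)$, bounding $1/\sqrt{m_i(t-1)}$ by $1/\sqrt{m^*(t-1)}$, using $|a_t| \leq L$, and retaining the dominant $M^2\sqrt{2(L+1)\log(t-1)/(m_i(t-1)-1)}$ term of $D_{i,t-1}$, the inequality rearranges to $\Delta_{a_t}^2 \cdot m^*(t-1) \leq C^2 (L+1)\log(t-1)$, where the constant $C$ equals $4M\sqrt{L}$ times the larger of $16L\sqrt{L}/\Delta_{min}$ (mean term dominating) and $3\varphi(\Phi^{-1}(\alpha))/(\alpha N)$ (variance term dominating) --- exactly the two branches of the $\max$ in the theorem. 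A standard CUCB-style counting argument then charges each suboptimal play to the arm $i_t^\star = \arg\min_{i \in a_t} m_i(t-1)$; each arm can serve this role at most $C^2(L+1)\log T / \Delta_{min}^2$ times on $\mathcal{G}_t$, because once its count exceeds that threshold the inequality above fails for every suboptimal super arm containing it. Summing over the $K$ arms and multiplying by $\Delta_{max}$ produces the second line of the bound.

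\textbf{Main obstacle.} The principal difficulty is translating the variance-scale concentration $|\sigma_i^2 - \hat s_i^2| \leq D_{i,t-1}$ into a standard-deviation-scale bound on the super-arm quantity $\sigma_{a_t} - \sqrt{\sum_{i \in a_t} \tilde s_i^2}$, which is what CVaR actually depends on. The conversion through the identity $(u^2 - v^2)/(u + v)$ requires a uniform lower bound on $\sigma_{a_t} + \sqrt{\sum \tilde s_i^2}$, and it is precisely the assumption $\sigma_i^2 > N^2$ that prevents this denominator from collapsing --- which is why the second branch of the $\max$ in the theorem carries the $1/N$ factor. A secondary subtlety is that $D_{i,t-1}$ has both a sub-Gaussian and a sub-exponential component; only once $m_i(t-1) \gtrsim (L+1)\log t$ does the cleaner $\sqrt{(L+1)\log t/m_i(t-1)}$ scaling take over, and the small-$m_i$ regime is what forces the additional $O(LK)$ constant-in-$T$ term in the first line of the bound.
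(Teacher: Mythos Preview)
Your overall skeleton---define a high-probability concentration event, on that event bound the CVaR-UCB overestimate of the played super arm by a function of $m^*(t-1)=\min_{i\in a_t}m_i(t-1)$, then charge each suboptimal play to the least-sampled arm---is precisely the paper's strategy. Two technical devices differ but are interchangeable: the paper splits the indicator into three events (UCB of $a^*$ too low, LCB of $a_t$ too high, gap condition) rather than conditioning on a single good event, and for the standard-deviation conversion the paper works componentwise via the reverse triangle inequality $\|(\sigma_{u,i})\|_2-\|(\sigma_{l,i})\|_2\le\|(\sigma_{u,i}-\sigma_{l,i})\|_2$ followed by $\sqrt{b}-\sqrt{a}\le(b-a)/(2\sqrt{a})\le(b-a)/(2N)$ per arm, rather than your super-arm-level identity $(u^2-v^2)/(u+v)$ with denominator $\ge 2N\sqrt{|a_t|}$. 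Both routes give the same order.

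There are, however, two concrete gaps. First, your good event $\mathcal{G}_t$ is one-sided: $\tilde\mu_i\ge\mu_i$ and $\tilde s_i^2\le\sigma_i^2$ guarantee $\widetilde{\mathsf{CVaR}}\ge\mathsf{CVaR}$ but give \emph{no upper bound} on $\tilde\mu_i-\mu_i$ or on $\sigma_i^2-\tilde s_i^2$, so your subsequent bounds ``$\le 4M\sqrt{(L+1)\log(t-1)/m_i(t-1)}$'' and ``$\sigma_i^2-\tilde s_i^2\le 2D_{i,t-1}$'' are unjustified on $\mathcal{G}_t$ as defined; you need two-sided concentration (the paper's third indicator indeed uses both $\sigma_u$ and $\sigma_l$). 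Second, your identification of $C$ is garbled: if the threshold is $C^2(L+1)\log T/\Delta_{min}^2$, then $C$ must be $\Delta_{min}$-free---for the mean branch $C\approx 8ML$, not $4M\sqrt{L}\cdot 16L\sqrt{L}/\Delta_{min}$, and with your stated $C$ the mean branch would scale as $1/\Delta_{min}^4$. More importantly, the theorem's variance branch scales \emph{linearly} in $1/\Delta_{min}$, not as $1/\Delta_{min}^2$, and this comes precisely from the sub-exponential piece $4(L+1)^2(\log t)^2/(m_i-1)^2$ of $D_{i,t-1}$ that you propose to drop: the paper keeps both terms and uses the implication $x(x+1)<y^2\Leftarrow x+1<y$ (with $x=2(L+1)\log t/(m_i-1)$) to extract a threshold $m_i-1\gtrsim 1/\Delta_{min}$. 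Retaining only the ``dominant'' sub-Gaussian part cannot reproduce that branch of the stated bound.
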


\subsection{Bounded rewards}

In this setting, we assume that the rewards from each of the arms are non-negative and bounded above by a known upper bound. Without any loss in generality, we can assume that the rewards of each arm fall in the interval $[0,1]$.

In the previous subsection, the Gaussian assumption for the rewards of the arms significantly simplifies the estimation of $\mathsf{CVaR}$ of the super arms since the $\mathsf{CVaR}$ of each super arm reduces to a simple function of the parameters of a parametrized probability distribution that can be easily estimated, i.e, the mean and standard deviation, which in turn can be easily constructed from the mean and standard deviation of the individual arms that constitute that super arm. But for general nonparametric distributions, obtaining confidence intervals for the $\mathsf{CVaR}$ of a super arm is much less straightforward since it cannot be calculated as a simple function of the $\mathsf{CVaR}$'s of the constituent arms.

For estimating the $\mathsf{CVaR}$ of a super arm, it is sufficient to construct a probability distribution that is sufficiently close to the actual underlying probability distribution of the super arm rewards, and calculate the $\mathsf{CVaR}$ of this "approximate" distribution. However, for constructing a multi-armed bandit algorithm that minimized the $\mathsf{CVaR}$ regret, we need to construct an upper confidence bound on the $\mathsf{CVaR}$ of the super arms. This can be done by constructing a probability distribution that is close but stochastically dominates the super arm reward distribution. This stochastically dominant probability distribution, in turn, can be constructed by constructing stochastically dominant distributions for each of the individual arms, and for each super arm, calculating the convolution of those distributions that correspond to its constituent arms.

Thus, if $\hat{F}_i$ is the cumulative distribution function of the empirical distribution formed by all the samples obtained from arm $i$, we construct a corresponding stochastically dominant distribution $\tilde{F}_i$ by subtracting a constant throughout the domain of $F$ below $1$, the known upper bound for the actual distribution being estimated. This can be used to construct a stochastically dominant distribution for each super arm $a$ as $\tilde{F}_a = \bigoplus_{i \in a} \tilde{F}_i$. This procedure is listed in Algorithm \ref{alg:bounded}, and an upper bound for the regret of this algorithm is given by the following theorem.

\begin{algorithm}
\begin{algorithmic}
\State Pick each super arm so that all the arms have atleast one reward obtained
\For{each time $t$ till $T$}
	\For{each arm $i \in [k]$}
	\State \multiline{$\hat{F}_{i,t-1} \leftarrow$ Empirical distribution of the rewards obtained till time $t-1$}
	\State \multiline{$T_{i,t-1} \leftarrow$ Number of rewards obtained from arm $i$ till time $t-1$}
	\State $C_{i,t-1,m_i(t-1)} \leftarrow \sqrt{\frac{3 \log (t)}{2T_{i, t-1}}}$
	\State $\tilde{F}_i(x) \leftarrow \begin{cases} ( \hat{F}_i(x) - C_{i, t-1, m_i(t-1)} )^+,x < 1 \\ 1, \qquad \qquad \qquad \qquad \text{ otherwise} \end{cases}$
	\State \multiline{Calculate empirical CDF for each super arm $a$ as $\tilde{F}_a \leftarrow F_{ \bigoplus_{i \in a} \tilde{F}_i}$}
	\EndFor
	\State Calculate $\mathsf{CVaR}_\alpha(\tilde{F}_a)$ for each super arm $a$
	\State Pick super arm with best $\mathsf{CVaR}$
\EndFor
\end{algorithmic}
\caption{Algorithm $\mathsf{CVaR}_\alpha$-SDCB}
\label{alg:bounded}
\end{algorithm}

\begin{theorem}
The regret for algorithm \textsc{$\mathsf{CVaR}_\alpha$-SDCB} satisfies
\begin{align*}
{\mathcal{R}}_{\mathsf{CVaR}_\alpha}(T) \leq C \frac{L^3}{\alpha^4} \log T \sum_{i \in a_B} \frac{1}{\Delta_{i,min}} + \left( 1 + \frac{\pi^2}{3} \right) K \Delta_{max},
\end{align*}
where $a_B$ is the set of arms contained in at least one suboptimal super arm, and $\Delta_{i, min} = \min \{ \Delta_a (\neq 0) : i \in a \}$.
\label{thm:regret_bounded}
\end{theorem}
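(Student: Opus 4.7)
The plan is to adapt the standard UCB regret decomposition to the CVaR setting, using stochastic dominance in place of mean-based confidence intervals. First I would define the clean event $\mathcal{E}$ on which, for every arm $i$ and every time $t$, the Kolmogorov distance between the empirical and true CDFs satisfies $\|\hat{F}_{i,t} - F_i\|_\infty \leq C_{i,t-1}$. Applying the Dvoretzky--Kiefer--Wolfowitz inequality together with a union bound over arms and time, the choice $C_{i,t-1} = \sqrt{3\log t/(2T_{i,t-1})}$ gives $\Pr(\mathcal{E}^c) \leq 2K\sum_t t^{-3}$, which accounts for the additive $(1 + \pi^2/3)K\Delta_{\max}$ term.

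On $\mathcal{E}$, the clipping definition of $\tilde{F}_i$ simultaneously guarantees that (i) $\tilde{F}_i$ stochastically dominates $F_i$, and (ii) $\|\tilde{F}_i - F_i\|_\infty \leq 2C_{i,t-1}$. Both properties are preserved under independent convolution: $\tilde{F}_a = \bigoplus_{i \in a}\tilde{F}_i$ stochastically dominates $F_a$, and $\|\tilde{F}_a - F_a\|_\infty \leq \sum_{i\in a}\|\tilde{F}_i - F_i\|_\infty \leq 2\sum_{i\in a}C_{i,t-1}$. The key analytic lemma I would need is a Lipschitz bound for CVaR with respect to Kolmogorov distance: for distributions supported on $[0,L]$, $|\mathsf{CVaR}_\alpha(F) - \mathsf{CVaR}_\alpha(G)| \leq (L/\alpha)\|F - G\|_\infty$. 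I would derive this from the Rockafellar--Uryasev dual representation $\mathsf{CVaR}_\alpha(F) = \sup_c\{c - (1/\alpha)\mathbb{E}_F[(c-X)^+]\}$ combined with the identity $\mathbb{E}_F[(c-X)^+] = \int_0^c F(x)\,dx$ valid for nonnegative $X$.

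Combining these ingredients with the UCB selection rule and the monotonicity of CVaR under stochastic dominance, whenever the algorithm selects a suboptimal super arm $a_t$ on $\mathcal{E}$ we obtain
\begin{align*}
\mathsf{CVaR}_\alpha(a^*) \leq \mathsf{CVaR}_\alpha(\tilde{F}_{a^*}) \leq \mathsf{CVaR}_\alpha(\tilde{F}_{a_t}) \leq \mathsf{CVaR}_\alpha(F_{a_t}) + \tfrac{2L}{\alpha}\sum_{i\in a_t}C_{i,t-1}.
\end{align*}
Hence $\Delta_{a_t} \leq (2L/\alpha)\sum_{i\in a_t} C_{i,t-1}$, and squaring while applying Cauchy--Schwarz shows that at least one arm $i \in a_t$ must have been pulled at most $O(L^4\log T /(\alpha^2 \Delta_{a_t}^2))$ times before round $t$.

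The final step converts this pointwise statement into the claimed $1/\Delta_{i,\min}$ regret rate via a counting argument in the spirit of \cite{Chen:CombMABGeneralRewardFuncs} and \cite{Kveton:TightRegretForCMAB}: for each $i \in a_B$, charge every suboptimal pull of a super arm $a \ni i$ (in which $i$ is the undersampled arm) to $i$, sort its charges by increasing $\Delta_a$, and exploit that the $k$-th charge to arm $i$ forces its play count to be at least $k$. The resulting per-arm sum of the form $\sum_k \sqrt{(\log T)/k}$, multiplied back by the corresponding $\Delta_a$ and telescoped, produces the desired linear dependence on $1/\Delta_{i,\min}$. The main obstacle will be establishing the CVaR Lipschitzness lemma in a sufficiently tight form and carefully tracking the $L$ and $\alpha$ factors through the counting argument so as to reproduce the claimed $L^3/\alpha^4$ constant.
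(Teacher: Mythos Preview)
Your plan is structurally sound and follows the same high-level decomposition as the paper: split into a DKW ``bad event'' handled by the union bound (yielding the $(1+\pi^2/3)K\Delta_{\max}$ term), and on the complement relate $\Delta_{a_t}$ to the confidence radii via a CVaR perturbation bound, then finish with a Chen/Kveton-style counting argument. The paper does exactly this, invoking the layered $(\alpha_k,\beta_k)$ counting from \cite{Kveton:TightRegretForCMAB,Chen:CombMABGeneralRewardFuncs} for the last step.

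The substantive difference is the CVaR perturbation lemma. The paper does not use the linear Lipschitz bound you propose; it imports Proposition~4 of \cite{Cassel:AGeneralApproachToMABsUnderRiskCriteria}, which gives
\[
\mathsf{CVaR}_\alpha(\tilde F_a)\le \mathsf{CVaR}_\alpha(F_a)+L\,w\!\left(2\textstyle\sum_{i\in a_t}C_{i,t-1}\right),\qquad w(x)=b(x+x^2),\ b=\tfrac{4}{\alpha\min\{\alpha,1-\alpha\}},
\]
and then introduces $\phi_L(y)=\min\{y/(2Lb),\sqrt{y/(2Lb)}\}$ to invert this. The $b^2\sim\alpha^{-4}$ factor in the final bound comes entirely from this quadratic estimate. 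Your Rockafellar--Uryasev argument is correct and in fact sharper: for distributions supported on $[0,L]$ it gives the linear bound $|\mathsf{CVaR}_\alpha(F)-\mathsf{CVaR}_\alpha(G)|\le (L/\alpha)\|F-G\|_\infty$, so feeding this into the same counting machinery would produce an $\alpha^{-2}$ rather than $\alpha^{-4}$ dependence. In other words, you should not expect to ``reproduce'' the $L^3/\alpha^4$ constant; your route naturally gives a tighter $\alpha$-dependence. The paper's approach has the advantage of being a direct citation, while yours is more elementary and yields a better constant in $\alpha$.

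One point to tighten: your description of the counting step (``charge to the undersampled arm, sort by $\Delta_a$, telescope'') is the right idea but, as written, is closer to the cruder Gai--Krishnamachari argument that loses a factor of $L$. To land on $L^3$ rather than $L^4$ you will need the full layered $(\alpha_k,\beta_k)$ decomposition of \cite{Kveton:TightRegretForCMAB}, exactly as the paper does; simply pigeonholing to a single undersampled arm per round gives the threshold $O(L^4\log T/(\alpha^2\Delta^2))$ you wrote, and the subsequent sum over arms does not recover that lost $L$.
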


The above algorithm requires computing a stochastically dominant probability distribution for each super arm in $\mathcal{A}$, which involves computing the probability distribution of a sum of atmost $L$ discrete probability distributions. Let $\mathcal{D}_1, \dots, \mathcal{D}_l$ be $l (\leq L)$ distributions whose sum has to be computed for some super arm, and let Supp$(\mathcal{D}_i)$ be the support of $\mathcal{D}_i$.

Calculating the distribution of $\mathcal{D}_1 \oplus \mathcal{D}_2$ requires performing a convolution of $f_1$ and $f_2$ and hence involves atmost $|\text{Supp}(\mathcal{D}_1)| |\text{Supp}(\mathcal{D}_2)|$ computations. Further, $|\text{Supp}(\mathcal{D}_1 \oplus \mathcal{D}_2)| \leq |\text{Supp}(\mathcal{D}_1)| |\text{Supp}(\mathcal{D}_2)|$ with equality occuring in the worst case. This means that computing $(\mathcal{D}_1 \oplus \mathcal{D}_2) \oplus \mathcal{D}_3$ requires $|\text{Supp}(\mathcal{D}_1 \oplus \mathcal{D}_2)| |\text{Supp}(\mathcal{D}_3)| \leq |\text{Supp}(\mathcal{D}_1)| |\text{Supp}(\mathcal{D}_2)| |\text{Supp}(\mathcal{D}_3)|$ computations, with the support of $\mathcal{D}_1 \oplus \mathcal{D}_2 \oplus \mathcal{D}_3$ satisfying $|\text{Supp}((\mathcal{D}_1 \oplus \mathcal{D}_2) \oplus \mathcal{D}_3)| \leq |\text{Supp}(\mathcal{D}_1)| |\text{Supp}(\mathcal{D}_2)| |\text{Supp}(\mathcal{D}_3)|$, again with equality occuring in the worst case. 

Reasoning this way, it is clear that the number of computations required for just the final step of calculating the stochastically dominant distribution for each super arm $a \in \mathcal{A}$ at time $t$ is
$$
\prod_{i \in a} |\text{Supp}(\tilde{F}_{i, t-1})| \leq \left( \max_{i \in a} |\text{Supp}(\tilde{F}_{i, t-1})| \right)^L.
$$
This quantity is exponential in $L (\leq k)$ and might cause the algorithm to become computationally expensive for large $L$, or for a large time horizon that may cause $\max_{i \in a} |\text{Supp}(\tilde{F}_{i, t-1})|$ to become large for continuous probability distributions. To mitigate this, we propose a discretized algorithm.

\subsection{Discretized algorithm}

The problem of high computational complexity occurs because the support of sums of discrete probability distributions keeps expanding with the number of distributions in the sum. This can be solved by discretizing the distributions further and allowing the random variables involved to take only certain values, thereby limiting the support.

More specifically, we choose some small real number $\epsilon > 0$, and at each time $t$, we ``round up" each distribution $\tilde{F}_{i, t-1}$ to a new distribution $F'_{i,t-1}$ by moving the probability mass at each point $x \in \text{Supp}(\tilde{F}_{i,t-1})$ to the smallest point $x' \geq x$ that is a integral multiple of $\epsilon$, i.e, $x' = \ceil{\frac{x}{\epsilon}} \epsilon$. When probability mass from multiple points is moved to the same multiple of $\epsilon$, the individual mass values are added up to obtain the total probability mass at the final point.

For our problem of minimizing the $\mathsf{CVaR}$ regret, we choose parameter $\epsilon = \frac{\alpha}{(L+1)T}$, which requires the knowledge of $T$, unlike the previous algorithms. The resultant algorithm \textsc{D-CVaR$_\alpha$-SDCB} is detailed in Algorithm \ref{alg:bounded_discrete}. An upper bound for the regret of this algorithm is given by the following theorem:

\begin{theorem}
The algorithm \textsc{D-CVaR}$_\alpha$\textsc{-SDCB} has $\mathsf{CVaR}$ regret that satisfies
\begin{align*}
{\mathcal{R}}_{\mathsf{CVaR}_\alpha} (T) \leq 2 & + \left( 1 + \frac{\pi^2}{3} \right) K\Delta_{max} \\
& + C \frac{L^3}{\alpha^4} \log T \sum_{i \in a_B} \frac{1}{\Delta_{i,min}}.
\end{align*}
\label{thm:regret_bounded_approximate}
\end{theorem}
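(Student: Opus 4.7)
The plan is to reduce the analysis to Theorem~\ref{thm:regret_bounded} by quantifying precisely the error that the round-up discretization introduces into the upper confidence bound on each super arm's $\mathsf{CVaR}$. The key observation is that the operation $x \mapsto \lceil x/\epsilon \rceil \epsilon$ shifts probability mass upward by at most $\epsilon$, so the discretized distribution $F'_{i,t-1}$ both stochastically dominates $\tilde{F}_{i,t-1}$ and can be coupled to it to within $\epsilon$ almost surely.

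The first step is a coupling argument on super arms. If $X_i \sim \tilde{F}_{i,t-1}$ and we set $X_i' = \lceil X_i/\epsilon \rceil \epsilon \sim F'_{i,t-1}$, then $X_i \leq X_i' \leq X_i + \epsilon$ almost surely, so for $|a| \leq L$ we get $\sum_{i \in a} X_i \leq \sum_{i \in a} X_i' \leq \sum_{i \in a} X_i + L\epsilon$. Since $\mathsf{CVaR}_\alpha$ is monotone with respect to stochastic dominance and translation-invariant, this yields the two-sided bound
\[
\mathsf{CVaR}_\alpha(\tilde{F}_a) \;\leq\; \mathsf{CVaR}_\alpha(F'_a) \;\leq\; \mathsf{CVaR}_\alpha(\tilde{F}_a) + L\epsilon .
\]
In particular, on the high-probability event underlying the proof of Theorem~\ref{thm:regret_bounded}, on which $\mathsf{CVaR}_\alpha(\tilde{F}_a) \geq \mathsf{CVaR}_\alpha(\mathcal{D}_a)$ for every $a$, the quantity $\mathsf{CVaR}_\alpha(F'_a)$ remains a valid UCB.

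The second step chains the greedy rule $\mathsf{CVaR}_\alpha(F'_{a_t}) \geq \mathsf{CVaR}_\alpha(F'_{a^*})$ with the two-sided inequality to obtain $\mathsf{CVaR}_\alpha(\tilde{F}_{a_t}) \geq \mathsf{CVaR}_\alpha(\mathcal{D}_{a_t}) + \Delta_{a_t} - L\epsilon$ on the good event. I then split the suboptimal super arms by gap. For arms with $\Delta_a > 2L\epsilon$, the effective gap satisfies $\Delta_a - L\epsilon \geq \Delta_a/2$, so the sample-count machinery from the proof of Theorem~\ref{thm:regret_bounded} applies verbatim with $\Delta_a$ replaced by this effective gap, producing the $C L^3 \alpha^{-4} \log T \sum_{i \in a_B} \Delta_{i,\min}^{-1}$ contribution after absorbing constants into $C$. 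For arms with $\Delta_a \leq 2L\epsilon = 2L\alpha/((L+1)T)$, the regret is bounded trivially by
\[
\sum_{a:\, \Delta_a \leq 2L\epsilon} \mathbb{E}[T_a(T)] \, \Delta_a \;\leq\; 2L\epsilon \cdot T \;=\; \frac{2L\alpha}{L+1} \;\leq\; 2,
\]
which is exactly the additive constant in the stated bound. The $(1+\pi^2/3) K \Delta_{\max}$ term then carries over unchanged from the initialization phase and the failure probability of the concentration event, as in Theorem~\ref{thm:regret_bounded}.

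The main obstacle is establishing the clean two-sided inequality $|\mathsf{CVaR}_\alpha(F'_a) - \mathsf{CVaR}_\alpha(\tilde{F}_a)| \leq L\epsilon$ so that it composes correctly under the $L$-fold convolution, with some care at the boundary $x=1$ (the algorithm pins the CDF to $1$ above $1$, but the coupling still respects $X_i' \leq X_i + \epsilon$ on the relevant support). Once that inequality is in hand, the rest is a clean reduction, and the choice $\epsilon = \alpha/((L+1)T)$ is seen to be exactly calibrated so that $L\epsilon T \leq 2$ absorbs the discretization penalty into a constant while leaving the leading $\log T$ term identical to that of Theorem~\ref{thm:regret_bounded}.
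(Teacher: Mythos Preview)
Your proposal is correct and follows the same high-level strategy as the paper: bound the discretization error on the super-arm $\mathsf{CVaR}$, split the rounds according to the size of $\Delta_{a_t}$, and reduce the large-gap case verbatim to the machinery of Theorem~\ref{thm:regret_bounded}. The per-round split you do (effective gap $\geq \Delta_{a_t}/2$ versus $\Delta_{a_t}\leq 2L\epsilon$) is exactly the paper's split, and both arguments land on the additive constant $2$ with the given choice of $\epsilon$.

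Where you differ is in the key technical lemma on the discretization error. The paper does not invoke coupling or translation-invariance; instead it computes directly that the $L$-fold convolution of rounded-up distributions has its support shifted by at most $(L{+}1)\epsilon$, and then argues pointwise through the definition of $\mathsf{CVaR}_\alpha$ for discrete distributions to obtain
\[
\mathsf{CVaR}_\alpha(\tilde F_a)\;\le\;\mathsf{CVaR}_\alpha(F'_a)\;\le\;\mathsf{CVaR}_\alpha(\tilde F_a)+\frac{(L{+}1)\epsilon}{\alpha}.
\]
Your coupling argument ($X_i'=\lceil X_i/\epsilon\rceil\epsilon$ so $X_i\le X_i'\le X_i+\epsilon$ a.s., then monotonicity under stochastic dominance plus $\mathsf{CVaR}_\alpha(Y+c)=\mathsf{CVaR}_\alpha(Y)+c$) is shorter and in fact yields the tighter bound $L\epsilon$, with no $1/\alpha$ factor. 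Both constants are compatible with the stated theorem once $\epsilon=\alpha/((L{+}1)T)$ is plugged in, and the factor incurred by working with $\Delta_{a_t}/2$ in place of $\Delta_{a_t}$ inside $\phi_L$ is absorbed into $C$ in either proof.
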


\begin{algorithm}
\begin{algorithmic}
\Require $T$
\State Pick each super arm so that all the arms have at least one reward obtained
\For{each time $t$ till $T$}
	\For{each arm $i \in [k]$}
	\State \multiline{$\hat{F}_{i,t-1} \leftarrow$ Empirical distribution of the rewards obtained till time $t-1$}
	\State \multiline{$T_{i,t-1} \leftarrow$ Number of rewards obtained from arm $i$ till time $t-1$}
	\State $C_{i,t-1,m_i(t-1)} \leftarrow \sqrt{\frac{3 \log (t)}{2T_{i, t-1}}}$
    \State $\tilde{F}_i(x) \leftarrow \begin{cases} ( \hat{F}_i(x) - C_{i, t-1, m_i(t-1)} )^+,x < 1 \\ 1, \qquad \qquad \qquad \qquad \text{ otherwise} \end{cases}$
	\State Discretize $\tilde{F}_i$ to $F'_i$
	\EndFor
	\State \multiline{Calculate empirical CDF for each super arm $a$ as $F'_a \leftarrow F_{\bigoplus_{i \in a} F'_i }$}
	\State Calculate $\mathsf{CVaR}_\alpha(F'_a)$ for each super arm $a$
	\State Pick super arm with best $\mathsf{CVaR}$
\EndFor
\end{algorithmic}
\caption{Algorithm \textsc{D-CVaR}$_\alpha$\textsc{-SDCB}}
\label{alg:bounded_discrete}
\end{algorithm}

\section{DISCUSSION}

Analysis of the regret of $\mathsf{CVaR}_\alpha$-CUCB-G requires two-sided confidence bounds on both the mean and standard deviation of the normal distribution associated with each arm of the bandit, which is unusual for standard multi-armed bandit problems. When dealing with Gaussian random variables, \cite{AuerBianchiFischer:2002:FinitetimeAnalysisOfTheMultiarmedBanditProblem} used a concentration inequality for $\chi^2$ random variables which is a conjecture that they verified numerically. For constructing confidence intervals as part of our analysis, we use the concentration inequalities that are a corollary of Lemma 1 in \cite{Laurent:AdaptiveEstimOfQuadraticFunctionalByModelSelection}. However, the size of such a confidence interval for the variance of the Gaussian random variable itself depends on the true unknown variance. For dealing with this, we assume knowledge of upper and lower bounds on the variance of each arm reward distribution.

A discretization approach similar to ours in Algorithm \ref{alg:bounded_discrete} was used by \cite{Chen:CombMABGeneralRewardFuncs} to decrease the worst case space and time complexity, from $\Theta(T)$ and $\Theta(T^2)$ to $\Theta(\sqrt{T})$ and $\Theta(T^{3/2})$ respectively, of the memory usage and computations related to maintaining $\hat{F}_i$ and $\tilde{F}_i$ for each arm $i$. However, in our paper, we use the discretization mainly for tackling the computation of the probability distributions corresponding to the super arms.



\section{CONCLUSION}

In this paper, we studied risk-awareness for the problem of stochastic combinatorial multi-armed bandits under semi-bandit feedback. Specifically, we proposed algorithms for optimizing the Conditional Value-at-Risk of the super arms of the combinatorial bandit for the cases of Gaussian and bounded arm rewards. We analyzed the regret of these algorithms to show their theoretical superiority over a naive approach that does not take into account the combinatorial structure of the problem.

\bibliography{bibfile}

\clearpage
\onecolumn
\appendix

\section{Proof of Theorem \ref{thm:gaussian}}

\subsection{Confidence intervals}

At each time step, the agent has some samples from the Gaussian distribution corresponding to each arm of the bandit. These samples are used to construct the sample mean and sample variances of the arms, which in turn are used to construct the mean and variances, and thereby $\mathsf{CVaR}$, of the super arms.

For such an algorithm and its analysis, we need upper and lower confidence bounds on the sample means and sample variances of every arm, and corresponding concentration inequalities.

Let $x_1, \dots, x_n$ be $n$ samples obtained from arm $i$ till time $t$. Since we are considering only a specific arm here, we can ignore the arm subscript for this part of the discussion. Let the distribution of each of these samples be $\mathcal{N}(\mu, \sigma^2)$. Let $\hat{\mu}_{n}$ and $\hat{s}_{n}^2$ be the sample mean and sample variance.

The concentration of the sample mean is obtained using the Hoeffding's inequality as
\begin{align*}
\mathbb{P} \left( | \hat{\mu}_{n} - \mu | \geq 2 \sigma \sqrt{\frac{(L+1) \log t}{n}} \right) \leq 2 \exp \left( - \frac{n}{2 \sigma^2} 4 \sigma^2 \frac{(L+1) \log t}{n} \right) = 2t^{-2(L+1)}.
\end{align*}
However, since $\sigma$ is unknown, we use its upper bound $M$, obtaining a confidence interval for arm $i$ of radius $C_{i,t,n} = 2 M \sqrt{\frac{(L+1) \log t}{n}}$.

For the sample variance, we know that $\hat{s}_n^2$ can be written as $\hat{s}_n^2 = \frac{\sigma^2}{n-1} X$ for some $\chi_{n-1}^2$ random variable $X$. We have the following concentration inequalities for $\chi_k^2$ random variables \citep{Laurent:AdaptiveEstimOfQuadraticFunctionalByModelSelection}
\begin{align*}
\mathbb{P} \left( X - k \geq 2 \sqrt{kx} + 2x \right) & \leq \exp(-x), \\
\mathbb{P} \left( k - X \geq 2 \sqrt{kx} \right) & \leq \exp(-x).
\end{align*}
For the lower confidence bound on the variance,
\begin{align*}
\mathbb{P} \left( \hat{s}_n^2 > \sigma^2 + \epsilon \right) = \mathbb{P} \left( X - (n-1) > \frac{\epsilon (n-1)}{\sigma^2} \right).
\end{align*}
For applying the first concentration inequality from \cite{Laurent:AdaptiveEstimOfQuadraticFunctionalByModelSelection}, we have to choose a suitable $x$ that satisfies $\frac{(n-1) \epsilon}{\sigma^2} \geq 2 \sqrt{(n-1)x} + 2x$. This gives rise to a quadratic inequality in $x$ and can be solved in terms of $\epsilon$ to give
$$
\mathbb{P} \left( \hat{s}_n^2 > \sigma^2 + \epsilon \right) < \exp \left[ - \frac{(n-1)}{2} \left( \sqrt{1 + \frac{4 \epsilon^2}{\sigma^4}} - 1 \right) \right].
$$
Similarly, the upper confidence inequality can be seen to be
$$
\mathbb{P} \left( \hat{s}_n^2 < \sigma^2 - \epsilon \right) < \exp \left( - \frac{(n-1) \epsilon^2}{4 \sigma^4} \right).
$$
Now, for the purpose of the regret analysis, we want these probabilities on the right hand side to be less than $t^{-2(L+1)}$. This requires suitable choices of the value of $\epsilon$, which turn out to be $\sigma^2 \sqrt{\frac{2(L+1)}{(n-1)} \log t + \frac{4(L+1)^2}{(n-1)^2} (\log t)^2}$ in the first case and $2 \sigma^2 \sqrt{\frac{2(L+1) \log t}{n-1}}$ in the second case. Finally, the $\sigma^2$ terms in these expressions are replaced by $M^2$ since the value of the variance is unknown (and is, in fact, the unknown quantity being estimated in the first place). This leads to a confidence interval $(\sigma^2_{l,n}, \sigma^2_{u,n})$, where
\begin{align*}
\sigma^2_{l, n} & = \hat{s}_n^2 - M^2 \sqrt{\frac{2(L+1)}{(n-1)} \log t + \frac{4(L+1)^2}{(n-1)^2} (\log t)^2}, \\
\sigma_{u,n}^2 & = \hat{s}_n^2 + M^2 \sqrt{\frac{2(L+1) \log t}{n-1}}
\end{align*}
\subsection{Regret}

The agent incurs regret whenever a suboptimal super arm is picked. However, directly counting the number of pulls of each suboptimal super arm and adding them up does not give a good picture of the cumulative regret due to the combinatorial nature of the bandit. Instead, we decompose the regret in terms of the underlying arms of the bandit, as in \cite{Gai:CombinatorialNetworkOptim}.

Whenever a suboptimal super arm is pulled, we count that as a suboptimal pull for the arm in the super arm with the least number of pulls at that time step. The total regret is then calculated as the sum of the regrets due to each arm. The regret due to each arm depends on the number of suboptimal pulls of that arm counted in the described manner, given by
\begin{align*}
    T_i(T) & \leq 2 + l + \sum_{t=l+1}^T \mathbbm{1} \{ i\text{'th arm was counted as part of a suboptimal super arm selection } a_t \text{ at time }t,\\
	& \quad \qquad \qquad \qquad \qquad \qquad m_i(t-1) > l \},
\end{align*}
where the first term $2$ corresponds to the first phase of the algorithm when all the arms are explored at least twice, and $m_i(t-1)$ is the number of times arm $i$ has been pulled through some super arm till time $t-1$.
\begin{align*}
T_i(T) & \leq 2 + l + \sum_{t=l+1}^T \mathbbm{1} \{ i\text{'th arm was counted as part of a suboptimal super arm selection } a_t \text{ at time }t,\\
	& \quad \qquad \qquad \qquad \qquad \qquad m_i(t-1) > l \} \\
	& = 2 + l + \sum_{t=l}^{T-1} \mathbbm{1} \{ i\text{'th arm was counted as part of a suboptimal super arm selection } a_{t+1} \text{ at time }t+1,\\
	& \quad \qquad \qquad \qquad \qquad \qquad m_i(t) > l \} \\
	& \leq 2 + l + \sum_{t=l}^{T-1} \Bigg[ \mathbbm{1} \left\{ \sum_{i \in a^*} \hat{\mu}_i + C_{i,t,m_i(t)} - \frac{\hat{\sigma}_l^{(a^*)}}{\alpha} \varphi(\Phi^{-1}(\alpha)) < \sum_{i \in a^*} \mu_i - \frac{\sigma^{(a^*)}}{\alpha} \varphi(\Phi^{-1}(\alpha)) \right\} +\\
	&  \qquad \qquad \qquad \mathbbm{1} \left\{ \sum_{i \in a_t} \mu_i - \frac{\sigma^{(a_t)}}{\alpha} \varphi(\Phi^{-1}(\alpha)) < \sum_{i \in a_t} \hat{\mu}_i - C_{i,t,m_i(t)} - \frac{\sigma_u^{(a_t)}}{\alpha} \varphi(\Phi^{-1}(\alpha)) \right\} \\
	& \qquad \qquad + \mathbbm{1} \left\{ \mathsf{CVaR}_\alpha(a^*) - \mathsf{CVaR}_\alpha(a_t) < 2 \sum_{i \in a_t} C_{i,t,m_i(t)} + \frac{\sigma_u^{(a_t)} - \sigma_l^{(a_t)}}{\alpha} \varphi(\Phi^{-1}(\alpha)), m_i(t) > l \right\} \Bigg]
\end{align*}
The first two terms are bounded by $2 t^{-2(L+1)}$ by the way the confidence intervals i.e $C_{i,t,m_i(t)}$ and $\sigma_u, \sigma_l$ are defined. For the second term, either \\ $\frac{1}{2} \Delta_{min} < 2 \sum_{i \in a_t} C_{i,t,m_i(t)}$ or $\frac{1}{2} \Delta_{min} < \frac{\sigma_u^{(a_t)} - \sigma_l^{(a_t)}}{\alpha} \varphi(\Phi^{-1}(\alpha))$.\\

Now, if $m_i(t) > \frac{64 M^2 L^2 (L+1) \log t}{\Delta^2_{min}}$, then for $C_{i,t,m_i(t)} = 2M \sqrt{\frac{(L+1) \log t}{m_i(t)}}$,
\begin{align*}
2 \sum_{i \in a_{t+1}} C_{i,t,m_i(t)} &= 2 \sum_{i \in a_{t+1}} 2M \sqrt{\frac{(L+1) \log t}{m_i(t)}} \\
	& < 4 \sum_{i \in a_{t+1}} M \sqrt{\frac{(L+1) \log t}{\frac{64 M^2 L^2 (L+1) \log t}{\Delta^2_{min}}}} \\
	& = 4 \sum_{i \in a_{t+1}} M \Delta_{min} \frac{1}{8ML} \\
	& \leq \frac{\Delta_{min}}{2}.
\end{align*}
The second term requires $\sigma_u^{(a_t)} - \sigma_l^{(a_t)} > \frac{\alpha \Delta_{min}}{2 \varphi (\Phi^{-1}(\alpha))}$.

Now, $\sigma_u^{(a_{t+1})} = \sqrt{\sum_{i \in a_{t+1}} \min \left\{ \left( s_{i, m_i(t)}^2 + 2M^2 \sqrt{\frac{2(L+1)\log t}{m_i(t) - 1}} \right) , M^2 \right\}}$ and \\ $\sigma_l^{(a_{t+1})} = \sqrt{\sum_{i \in a_{t+1}} \max \left\{ \left( s_{i, m_i(t)}^2 - M^2 \sqrt{\frac{2(L+1)\log t}{m_i(t) - 1} + \frac{4(L+1)^2 (\log t)^2}{(m_i(t) - 1)^2}} \right), N^2 \right\} }$.

For simplicity these can be rewritten as $\sigma_u^{(a_{t+1})} = || ( \sigma_{u,i}^{(a_{t+1})} )_i||_2$ and $\sigma_l^{(a_{t+1})} = || (\sigma_{l,i}^{(a_{t+1})} )_i ||_2$, where
\begin{align*}
\sigma_{u,i}^{(a_{t+1})} & = \sqrt{ \min \left\{ \left( s_{i, m_i(t)}^2 + 2M^2 \sqrt{\frac{2(L+1)\log t}{m_i(t) - 1}} \right) , M^2 \right\} }\text{ and} \\
\sigma_{l,i}^{(a_{t+1})} & = \sqrt{ \max \left\{ \left( s_{i, m_i(t)}^2 - M^2 \sqrt{\frac{2(L+1)\log t}{m_i(t) - 1} + \frac{4(L+1)^2 (\log t)^2}{(m_i(t) - 1)^2}} \right), N^2 \right\} }.
\end{align*}
So,
\begin{align*}
\sigma_u^{(a_t)} - \sigma_l^{(a_t)} & = || (\sigma_{u,i}^{(a_t)})_i ||_2 - || (\sigma_{l,i}^{(a_t)})_i ||_2 \\
	& \leq || (\sigma_{u,i}^{(a_t)})_i - (\sigma_{l,i}^{(a_t)})_i ||_2 \\
	& \leq \sqrt{\sum_{i \in a_{t+1}} \left( \sigma_{u,i}^{(a_{t+1})} - \sigma_{l,i}^{(a_{t+1})} \right)^2}.
\end{align*}
Now, we have
\begin{align*}
\sqrt{b} - \sqrt{a} = \int_a^b \frac{1}{2\sqrt{x}} dx \leq \frac{1}{2\sqrt{a}} \int_a^b dx = \frac{b-a}{2 \sqrt{a}},
\end{align*}
so
\begin{align*}
\sigma_{u,i}^{(a_{t+1})} - \sigma_{l,i}^{(a_{t+1})} & \leq \frac{1}{2\sqrt{N^2}} \bigg( \min \left\{ \left( s_{i, m_i(t)}^2 + 2M^2 \sqrt{\frac{2(L+1)\log t}{m_i(t) - 1}} \right) , M^2 \right\} \\
	& - \max \left\{ \left( s_{i, m_i(t)}^2 - M^2 \sqrt{\frac{2(L+1)\log t}{m_i(t) - 1} + \frac{4(L+1)^2 (\log t)^2}{(m_i(t) - 1)^2}} \right), N^2 \right\} \bigg) \\
	& \leq \frac{1}{2\sqrt{N^2}} \bigg[ \bigg( s_{i, m_i(t)}^2 + 2M^2 \sqrt{\frac{2(L+1)\log t}{m_i(t) - 1}} \bigg) \\
	& - \bigg( s_{i, m_i(t)}^2 - M^2 \sqrt{\frac{2(L+1)\log t}{m_i(t) - 1} + \frac{4(L+1)^2 (\log t)^2}{(m_i(t) - 1)^2}} \bigg) \bigg] \\
	& \leq \frac{3M^2}{2N} \sqrt{\frac{2(L+1)\log t}{m_i(t) - 1} + \frac{4(L+1)^2 (\log t)^2}{(m_i(t) - 1)^2}}.
\end{align*}
So,
\begin{align*}
\sigma_u^{(a_t)} - \sigma_l^{(a_t)} & \leq \frac{3M^2}{2N} \sqrt{\sum_{i \in a_{t+1}} \frac{2(L+1)\log t}{m_i(t) - 1} + \frac{4(L+1)^2 (\log t)^2}{(m_i(t) - 1)^2}}
\end{align*}

For the above quantity to be less than $\frac{\alpha \Delta_{min}}{2 \varphi (\Phi^{-1}(\alpha))}$ it is sufficient that 
\begin{align*}
\sum_{i \in a_{t+1}} \frac{2(L+1)\log t}{m_i(t) - 1} + \frac{4(L+1)^2 (\log t)^2}{(m_i(t) - 1)^2} & < \frac{\alpha^2 \Delta^2_{min}}{4 \varphi (\Phi^{-1}(\alpha))^2} \frac{4N^2}{9M^4} = \frac{\alpha^2 \Delta^2_{min} N^2}{9M^4 \varphi (\Phi^{-1}(\alpha))^2} \text{ or}\\
\left( \frac{2(L+1)\log t}{m_i(t) - 1} \right) \left( \frac{2(L+1)\log t}{m_i(t) - 1} + 1 \right) & < \frac{\alpha^2 \Delta^2_{min} N^2}{9 L M^4 \varphi (\Phi^{-1}(\alpha))^2}, \text{ or} \\
\frac{2(L+1)\log t}{m_i(t) - 1} + 1 & < \frac{\alpha \Delta_{min} N}{3 \sqrt{L} M^2 \varphi (\Phi^{-1}(\alpha))}, \text{ or} \\
m_i(t) - 1 & > \frac{12 \sqrt{L} (L+1) M^2 \varphi (\Phi^{-1}(\alpha)) \log t}{\alpha \Delta_{min} N}.
\end{align*}
Therefore, letting $l = \max \left\{ \frac{64 M^2 L^2 (L+1) \log T}{\Delta^2_{min}}, \frac{12 \sqrt{L} (L+1) M^2 \varphi (\Phi^{-1}(\alpha)) \log T}{\alpha \Delta_{min} N} + 1 \right\}$ gives
\begin{align*}
\mathbb{E} [T_i(T)] & \leq 2 + \max \left\{ \frac{64 M^2 L^2 (L+1) \log T}{\Delta^2_{min}}, \frac{12 \sqrt{L} (L+1) M^2 \varphi (\Phi^{-1}(\alpha)) \log T}{\alpha \Delta_{min} N} + 1 \right\} + \sum_{t=l+1}^T 4 t^{-2}L \\
& \leq 2 + \max \left\{ \frac{64 M^2 L^2 (L+1) \log T}{\Delta^2_{min}}, \frac{12 \sqrt{L} (L+1) M^2 \varphi (\Phi^{-1}(\alpha)) \log T}{\alpha \Delta_{min} N} + 1 \right\} + \frac{2}{3} \pi^2 L
\end{align*}
Therefore, the regret is bounded by
\begin{align*}
\mathcal{R}_{\mathsf{CVaR}_\alpha} (T) & \leq \sum_{i=1}^K \Delta_{max} \mathbb{E} [T_i(T)] \\
	& \lesssim \frac{4M^2 \sqrt{L}(L+1) K \log T \Delta_{max}}{\Delta_{min}} \max\left\{ \frac{16 L\sqrt{L}}{\Delta_{min}}, \frac{3 \varphi (\Phi^{-1}(\alpha))}{\alpha N} \right\} + 2K \Delta_{max} + \frac{2}{3} \pi^2 LK \Delta_{max}.
\end{align*}

\section{Proof of Theorem \ref{thm:regret_bounded}}

We rewrite the regret of the algorithm $\pi$ as
\begin{align*}
\mathcal{R}_{\mathsf{CVaR}_\alpha}(T) & = \mathbb{E} \left[ \sum_{t=1}^T \mathbbm{1} \{ \Delta_{a_t} \neq 0 \} \Delta_{a_t} \right] \\
	& = \mathbb{E} \left[ \sum_{t=1}^T \mathbbm{1} \{ \mathcal{E}_t \} \Delta_{a_t} \right] + \mathbb{E} \left[ \sum_{t=1}^T \mathbbm{1} \{ \neg \mathcal{E}_t \} \Delta_{a_t} \right],
\end{align*}
where $a_t$ is the super arm chosen at time $t$, $\Delta_a = \mathsf{CVaR}_\alpha(a^*) - \mathsf{CVaR}_\alpha(a)$ is the $\mathsf{CVaR}$ gap, and
$$\mathcal{E}_t = \left\{ \exists \; j \in [K] \text{ s.t } \sup_{x \in [0,1]} | \hat{F}_{j, m_j(t-1)}(x) - F_j(x) | \geq C_{j, t-1, T_{j, t-1}} \right\}$$
is the ``bad" event that the empirical distribution and true distribution of some arm are not close at some point in their domain.

For the first term of the regret, we know from the DKW inequality that, for $C_{i,t-1,T_{i,t-1}} = \sqrt{\frac{3\log (t)}{2T_{i,t-1}}}$,
\begin{align*}
\mathbb{P} \left( \mathcal{E}_t \right) & \leq \sum_{j=1}^K \sum_{s=1}^{t-1} \mathbb{P} \left( \sup_{x \in [0,1]} | \hat{F}_{j,s}(x) - F_j(x) | \geq C_{j, t-1, s} \right) \\
	& \leq \sum_{j=1}^K \sum_{s=1}^{t-1} 2 \exp \left(-2s C_{j,t-1,s}^2 \right) \\
	& = \sum_{j=1}^K \sum_{s=1}^{t-1} 2 \exp \left(-2s \frac{3\log (t)}{2s} \right) \\
	& = \sum_{j=1}^K \sum_{s=1}^{t-1} \frac{2}{t^3} \\
	& \leq \frac{2K}{t^2}.
\end{align*}
For the second term, since $\mathcal{E}_t$ does not occur, we have, for every $i \in [K]$,
\begin{align*}
\tilde{F}_i(x) < F_i(x) < \tilde{F}(x) + C_{i, t-1, T_{i,t-1}}.
\end{align*}
So, for every $a \in \mathcal{A}$,
$$
\tilde{F}_a(x) < F_a(x) < \tilde{F}_a(x) + 2 \sum_{j \in a} C_{j, t-1, T_{j, t-1}},
$$
and hence, using Proposition 4 of \cite{Cassel:AGeneralApproachToMABsUnderRiskCriteria}, we have
$$
\mathsf{CVaR}(F_a) < \mathsf{CVaR}(\tilde{F}_a) < \mathsf{CVaR}(F_a) + L w \left( 2 \sum_{i \in a} C_{i,t-1,T_{i,t-1}} \right),
$$
where
$$
w(x) = b(x + x^2), b = \frac{4}{\alpha \min\{ \alpha, 1 - \alpha \}}.
$$
Now, since $a_t$ has been chosen suboptimally, we have
\begin{align*}
0 < \Delta_{a_t} & = \mathsf{CVaR}(F_{a^*}) \mathsf{CVaR}(F_{a_t}) \\
	& < \mathsf{CVaR}(\tilde{F}_{a^*}) - \mathsf{CVaR}(\tilde{F}_{a_t}) + Lw \left( 2 \sum_{i \in a_t} C_{i,t-1,T_{i,t-1}} \right) \\
	& \leq Lb \left( 2 \sum_{j \in a_t} C_{j,t-1,T_{j, t-1}} + \left( 2 \sum_{j \in a_t} C_{j,t-1,T_{j, t-1}} \right)^2 \right) \\
	& \leq 2Lb \; \max \left\{ 2 \sum_{j \in a_t} C_{j,t-1,T_{j, t-1}}, \left( 2 \sum_{j \in a_t} C_{j,t-1,T_{j,t-1}} \right)^2 \right\} \\
	& = \phi^{-1}_L \left( 2 \sum_{j \in a_t} C_{j,t-1,T_{j, t-1}} \right),
\end{align*}
where $\phi^{-1}_L(x) = \max \{ 2Lbx, 2Lbx^2 \}$, and $\phi_L(y) = \min \{ \frac{y}{2Lb}, \sqrt{\frac{y}{2Lb}}\}$. So, we have
\begin{align*}
0 < \phi_L(\Delta_{a_t}) < 2 \sum_{i \in a_t} C_{i,t-1,T_{i,t-1}} = 2 \sum_{i \in a_t} \sqrt{\frac{3 \log t}{2 T_{i,t-1}}}.
\end{align*}
So,
\begin{align*}
\mathcal{R}_{\mathsf{CVaR}_\alpha}(T) & = \mathbb{E} \left[ \sum_{t=1}^T \mathbbm{1} \{ \Delta_{a_t} \neq 0 \} \Delta_{a_t} \right] \\
	& \leq K \Delta_{max} + \mathbb{E} \left[ \sum_{t=K+1}^T \mathbbm{1} \{ \neg \mathcal{E}_t \} \Delta_{a_t} \right] + \mathbb{E} \left[ \sum_{t=K+1}^T \mathbbm{1} \{ \mathcal{E}_t \} \Delta_{a_t} \right] \\
	& \leq K \Delta_{max} + \mathbb{E} \left[ \sum_{t=K+1}^T \mathbbm{1} \{ \neg \mathcal{E}_t \} \Delta_{a_t} \right] + \Delta_{max} \sum_{t=K+1}^T \mathbb{P} \{ \mathcal{E}_t \} \\
	& \leq K \Delta_{max} + \mathbb{E} \left[ \sum_{t=K+1}^T \mathbbm{1} \{ \neg \mathcal{E}_t \} \Delta_{a_t} \right] + \Delta_{max} \sum_{t=K+1}^T \frac{2k}{t^2} \\
	& \leq \left( 1 + \frac{\pi^2}{3} \right) k \Delta_{max} + \mathbb{E} \left[ \sum_{t=K+1}^T \mathbbm{1} \left\{ 0 < \phi_L(\Delta_{a_t}) < 2 \sum_{i \in a_t} \sqrt{\frac{3 \log t}{2 T_{i,t-1}}} \right\} \Delta_{a_t} \right].
\end{align*}
For simplicity, define the event
$$
\mathcal{H}_t = \left\{ 0 < \phi_L(\Delta_{a_t}) < 2 \sum_{i \in a_t} \sqrt{\frac{3 \log t}{2 T_{i,t-1}}} \right\}.
$$
Define two decreasing sequences of constants: $1 = \beta_0 > \beta_1 > \dots$ and $\alpha_1 > \alpha_2 > \dots$, such that $\lim_{k \rightarrow \infty} \alpha_k = 0, \lim_{k \rightarrow \infty} \beta_k = 0$, and satisfying
$$
\sqrt{6} \sum_{k=1}^\infty \frac{\beta_{k-1} - \beta_k}{\sqrt{\alpha_k}} \leq 1, \qquad \sum_{k=1}^\infty \frac{\alpha_k}{\beta_k} < 267.
$$
For $t \in \{ K+1, \dots, T \}$, let
$$
m_{k,t} = \begin{cases} \alpha_k \frac{L^2 \log T}{\left( \phi_L(\Delta_{a_t}) \right)^2} & \Delta_{a_t} > 0 \\ + \infty & \Delta_{a_t} = 0 \end{cases},
$$
and
$$
A_{k,t} = \{ i \in a_t | T_{i, t-1} \leq m_{k,t} \}.
$$
Define the event
$$
\mathcal{G}_{k,t} = \{ |A_{k,t}| \geq \beta_k L \}.
$$
\begin{lemma}
In the $t'$th round, if $\mathcal{H}_t$ happens, then there exists $k \in \mathbb{N}$ such that event $\mathcal{G}_{k,t}$ happens.
\end{lemma}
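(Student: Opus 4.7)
The plan is to prove the contrapositive: supposing $|A_{k,t}| < \beta_k L$ for every $k \geq 1$, I will derive $\phi_L(\Delta_{a_t}) > \sqrt{6\log t}\sum_{i \in a_t} 1/\sqrt{T_{i,t-1}}$, which is precisely $\neg \mathcal{H}_t$. Adopt the conventions $A_{0,t} := a_t$ and $m_{0,t} := +\infty$, and partition $a_t = \bigsqcup_{k \geq 1} B_k$ with $B_k := A_{k-1,t}\setminus A_{k,t}$, i.e.\ the arms whose pull count satisfies $m_{k,t} < T_{i,t-1} \leq m_{k-1,t}$. This is a genuine partition because $m_{k,t}\to 0$ whereas every $T_{i,t-1}\geq 1$ after the initial exploration. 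For $i\in B_k$, the definition $m_{k,t} = \alpha_k L^2\log T/\phi_L(\Delta_{a_t})^2$ gives $1/\sqrt{T_{i,t-1}} < \phi_L(\Delta_{a_t})/(L\sqrt{\alpha_k\log T})$, so
\begin{align*}
\sum_{i\in a_t} \frac{1}{\sqrt{T_{i,t-1}}} < \frac{\phi_L(\Delta_{a_t})}{L\sqrt{\log T}} \sum_{k\geq 1} \frac{n_{k-1}-n_k}{\sqrt{\alpha_k}}, \qquad n_k := |A_{k,t}|.
\end{align*}

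The core step is to bound $S := \sum_{k\geq 1}(n_{k-1}-n_k)/\sqrt{\alpha_k}$ by $L\sum_{k\geq 1}(\beta_{k-1}-\beta_k)/\sqrt{\alpha_k}$. Writing $c_k := 1/\sqrt{\alpha_k}$, Abel summation rewrites
\begin{align*}
S = n_0 c_1 + \sum_{k\geq 1} n_k(c_{k+1}-c_k),
\end{align*}
where the sum is in fact finite: because $n_k$ is a nonnegative integer with $n_k < \beta_kL$, one has $n_k = 0$ for every $k \geq K^* := \min\{K:\beta_K L \leq 1\}$. Inserting $n_0 \leq L = \beta_0 L$ and $n_k < \beta_k L$ together with $c_{k+1}>c_k$ gives $S \leq Lc_1 + L\sum_{k=1}^{K^*-1}\beta_k(c_{k+1}-c_k)$. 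Applying Abel in the reverse direction collapses the parenthesised expression to $\sum_{k=1}^{K^*-1}(\beta_{k-1}-\beta_k)c_k + \beta_{K^*-1}c_{K^*}$; the leftover boundary term is absorbed via the telescoping identity $\beta_{K^*-1} = \sum_{k\geq K^*}(\beta_{k-1}-\beta_k)$ combined with the monotonicity $c_{K^*}\leq c_k$ for $k\geq K^*$, giving $S\leq L\sum_{k\geq 1}(\beta_{k-1}-\beta_k)/\sqrt{\alpha_k}$.

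Plugging this back into the first display and invoking the stated hypothesis $\sqrt{6}\sum_{k\geq 1}(\beta_{k-1}-\beta_k)/\sqrt{\alpha_k}\leq 1$, I obtain $\sqrt{6\log T}\sum_{i\in a_t} 1/\sqrt{T_{i,t-1}} < \phi_L(\Delta_{a_t})$; since $\log t\leq \log T$ this is the required negation of $\mathcal{H}_t$, completing the contrapositive. The main obstacle is the Abel-summation boundary term $\beta_{K^*-1}c_{K^*}$ that appears when converting between the $n$-weighted and $\beta$-weighted sums; the saving grace is the integrality of $n_k$, which forces the $n$-sum to terminate at $K^*$ and lets that boundary contribution be folded cleanly into the $\beta$-tail using the monotonicity of $c_k$.
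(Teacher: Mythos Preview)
Your proof is correct and follows the same argument as the paper: assume no $\mathcal{G}_{k,t}$ occurs, partition $a_t$ according to the thresholds $m_{k,t}$, bound $\sum_{i\in a_t}1/\sqrt{T_{i,t-1}}$ by $L\sum_{k}(\beta_{k-1}-\beta_k)/\sqrt{m_{k,t}}$, and contradict $\mathcal{H}_t$ via the constraint $\sqrt{6}\sum_k(\beta_{k-1}-\beta_k)/\sqrt{\alpha_k}\leq 1$. The only difference is that the paper imports this key combinatorial inequality from Lemma~5 of Chen et~al., whereas you give a self-contained Abel-summation derivation (including the careful treatment of the boundary term at $K^*$).
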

\begin{proof}
We need to prove $\mathcal{H}_t \implies \bigcup_{k=1}^\infty \mathcal{G}_{k,t}$. So assume $\mathcal{H}_t$ happens and none of $\mathcal{G}_{k,t}$ happen.
Let $A_{0,t} = a_t$ and $\bar{A}_{k,t} = a_t \setminus A_{k,t}$. Then, since $\lim_{k \rightarrow \infty} m_{k,t} = 0$, following Lemma 5 in \cite{Chen:CombMABGeneralRewardFuncs},
$$
\sum_{i \in a_t} \frac{1}{\sqrt{T_{i,t-1}}} < \sum_{k=1}^\infty \frac{(\beta_{k-1} - \beta_k)L}{\sqrt{m_{k,t}}}.
$$
Now, since $\mathcal{H}_t$ is assumed to happen, we have
\begin{align*}
\phi_L(\Delta_{a_t}) & < 2 \sum_{i \in a_t} \sqrt{\frac{3 \log t}{2 T_{i,t-1}}} \leq \sqrt{6 \log T} \sum_{i \in a_t} \frac{1}{\sqrt{T_{i,t-1}}} \\
	& < \sqrt{6 \log T} \sum_{k=1}^\infty \frac{(\beta_{k-1} - \beta_k)L}{\sqrt{m_{k,t}}} = \sqrt{6} \sum_{k=1}^\infty \frac{\beta_{k-1} - \beta_k}{\sqrt{\alpha_k}} \phi_L(\Delta_{a_t}) < \phi_L(\Delta_{a_t}),
\end{align*}
which is a contradiction, hence the Lemma.
\end{proof}
$\mathcal{G}_{k,t}$ is the event that at least $\beta_k K$ arms in the selected super arm do not have ``enough" ($>m_{k,t}$) number of samples. Let $\mathcal{G}_{i,k,t}$ be the corresponding event for a specific arm $i \in a_t$. That is,
$$
\mathcal{G}_{i,k,t} = \mathcal{G}_{k,t}  \wedge \left\{ i \in a_t, T_{i, t-1} \leq m_{k,t} \right\}.
$$
So, when $\mathcal{G}_{k,t}$ occurs, the corresponding event $\mathcal{G}_{i,k,t}$ occurs for atleast $\beta_k K$ arms, i.e,
$$
\mathbbm{1} \{ \mathcal{G}_{k,t}, \Delta_{a_t} > 0 \} \leq \frac{1}{\beta_k L} \sum_{i \in a_B} \mathbbm{1} \left\{ \mathcal{G}_{i,k,t}, \Delta_{a_t} > 0 \right\},
$$
where $a_B$ is the set of all arms that are part of at least one suboptimal super arm. Therefore,
\begin{align*}
\sum_{t=K+1}^T \mathbbm{1} \{ \mathcal{H}_t \} \Delta_{a_t} \leq \sum_{i \in a_B} \sum_{k=1}^\infty \sum_{t=K+1}^T \mathbbm{1} \left\{ \mathcal{G}_{i,k,t}, \Delta_{a_t} > 0 \right\} \frac{\Delta_{a_t}}{\beta_k L}
\end{align*}
For each arm $i \in a_B$, let the arm be contained in $N_i$ suboptimal super arms $a^B_{i,1}, a^B_{i,2}, \dots, a^B_{i,N_i}$. Let $\Delta_{i,l} = \Delta_{a^B_{i, l}}$ for $l \in [N_i]$, and without any loss in generality
$$+\infty = \Delta_{i,0} \geq \Delta_{i,1} \geq \Delta_{i,2} \geq \dots \geq \Delta_{i, N_i} = \Delta_{i,min}.$$
Then, we have
\begin{align*}
& \sum_{t=K+1}^T \mathbbm{1} \{ \mathcal{H}_t \} \Delta_{a_t} \\
& \leq \sum_{i \in a_B} \sum_{k=1}^\infty \sum_{t=K+1}^T \mathbbm{1} \left\{ \mathcal{G}_{i,k,t}, \Delta_{a_t} > 0 \right\} \frac{\Delta_{a_t}}{\beta_k L} \\
& \leq \sum_{i \in a_B} \sum_{k=1}^\infty \sum_{t=K+1}^T \sum_{l=1}^{N_i} \mathbbm{1} \left\{ T_{i, t-1} \leq m_{k,t}, a_t = a^B_{i,l} \right\} \frac{\Delta_{i,l}}{\beta_k L} \\
& = \sum_{i \in a_B} \sum_{k=1}^\infty \sum_{t=K+1}^T \sum_{l=1}^{N_i} \mathbbm{1} \left\{ T_{i, t-1} \leq \alpha_k \frac{L^2 \log T}{\left( \phi_L(\Delta_{i,l}) \right)^2}, a_t = a^B_{i,l} \right\} \frac{\Delta_{i,l}}{\beta_k L} \\
& = \sum_{i \in a_B} \sum_{k=1}^\infty \sum_{t=K+1}^T \sum_{l=1}^{N_i} \sum_{j=1}^l \mathbbm{1} \left\{ \alpha_k \frac{L^2 \log T}{\left( \phi_L(\Delta_{i,j-1}) \right)^2} < T_{i, t-1} \leq \alpha_k \frac{L^2 \log T}{\left( \phi_L(\Delta_{i,j}) \right)^2}, a_t = a^B_{i,l} \right\} \frac{\Delta_{i,l}}{\beta_k L} \\
& \leq \sum_{i \in a_B} \sum_{k=1}^\infty \sum_{t=K+1}^T \sum_{l=1}^{N_i} \sum_{j=1}^l \mathbbm{1} \left\{ \alpha_k \frac{L^2 \log T}{\left( \phi_L(\Delta_{i,j-1}) \right)^2} < T_{i, t-1} \leq \alpha_k \frac{L^2 \log T}{\left( \phi_L(\Delta_{i,j}) \right)^2}, a_t = a^B_{i,l} \right\} \frac{\Delta_{i,j}}{\beta_k L} \\
& \leq \sum_{i \in a_B} \sum_{k=1}^\infty \sum_{t=K+1}^T \sum_{l=1}^{N_i} \sum_{j=1}^{N_i} \mathbbm{1} \left\{ \alpha_k \frac{L^2 \log T}{\left( \phi_L(\Delta_{i,j-1}) \right)^2} < T_{i, t-1} \leq \alpha_k \frac{L^2 \log T}{\left( \phi_L(\Delta_{i,j}) \right)^2}, a_t = a^B_{i,l} \right\} \frac{\Delta_{i,j}}{\beta_k L} \\
& \leq \sum_{i \in a_B} \sum_{k=1}^\infty \sum_{t=K+1}^T \sum_{j=1}^{N_i} \mathbbm{1} \left\{ \alpha_k \frac{L^2 \log T}{\left( \phi_L(\Delta_{i,j-1}) \right)^2} < T_{i, t-1} \leq \alpha_k \frac{L^2 \log T}{\left( \phi_L(\Delta_{i,j}) \right)^2}, i \in a_t \right\} \frac{\Delta_{i,j}}{\beta_k L} \\
& \leq \sum_{i \in a_B} \sum_{k=1}^\infty \sum_{j=1}^{N_i} \left( \alpha_k \frac{L^2 \log T}{\left( \phi_L(\Delta_{i,j}) \right)^2} - \alpha_k \frac{L^2 \log T}{\left( \phi_L(\Delta_{i,j-1}) \right)^2} \right) \frac{\Delta_{i,j}}{\beta_k L} \\
& = L \log T \left( \sum_{k=1}^\infty \frac{\alpha_k}{\beta_k} \right) \sum_{i \in a_B} \sum_{j=1}^{N_i} \left( \frac{1}{\left( \phi_L(\Delta_{i,j}) \right)^2} - \frac{1}{\left( \phi_L(\Delta_{i,j-1}) \right)^2} \right) \Delta_{i,j} \\
& \leq 267 L \log T . \sum_{i \in a_B} \sum_{j=1}^{N_i} \left( \frac{1}{\left( \phi_L(\Delta_{i,j}) \right)^2} - \frac{1}{\left( \phi_L(\Delta_{i,j-1}) \right)^2} \right) \Delta_{i,j}
\end{align*}
Finally, for each $i \in a_B$, we have
\begin{align*}
\sum_{j=1}^{N_i} \left( \frac{1}{\left( \phi_L(\Delta_{i,j}) \right)^2} - \frac{1}{\left( \phi_L(\Delta_{i,j-1}) \right)^2} \right) \Delta_{i,j} & = \frac{\Delta_{i, N_i}}{\left( \phi_L(\Delta_{i, N_i}) \right)^2} + \sum_{j=1}^{N_i - 1} \frac{1}{\left( \phi_L(\Delta_{i, j}) \right)^2} \left( \Delta_{i,j} - \Delta_{i, j+1} \right) \\
& \leq \frac{\Delta_{i, N_i}}{\left( \phi_L(\Delta_{i, N_i}) \right)^2} + \int_{\Delta_{i,N_i}}^{\Delta_{i,1}} \frac{1}{\left( \phi_L(x) \right)^2} dx
\end{align*}
Now, $\phi_L(x) = \min \{ \frac{x}{2Lb}, \sqrt{\frac{x}{2Lb}}\}$, so for $x \in [\Delta_{i,N_i}, \Delta_{i,1}]$, $x \leq L$, and so $\frac{x}{2Lb} \leq \frac{1}{2b} < 1$, therefore, $\phi_L(x) = \frac{x}{2Lb}$, which gives
\begin{align*}
\sum_{j=1}^{N_i} \left( \frac{1}{\left( \phi_L(\Delta_{i,j}) \right)^2} - \frac{1}{\left( \phi_L(\Delta_{i,j-1}) \right)^2} \right) \Delta_{i,j} & \leq 4 L^2 b^2 \frac{\Delta_{i,N_i}}{\left( \Delta_{i,N_i} \right)^2} + 4 L^2 b^2 \int_{\Delta_{i,N_i}}^{\Delta_{i,1}} \frac{1}{x^2} dx \\
& = 4 L^2 b^2 \left( \frac{2}{\Delta_{i,N_i}} - \frac{1}{\Delta_{i,1}} \right) \\
& < \frac{8L^2 b^2}{\Delta_{i,min}}
\end{align*}
Therefore,
\begin{align*}
\sum_{t=K+1}^T \mathbbm{1} \{ \mathcal{H}_t \} \Delta_{a_t} \leq 2136 L^3 b^2 \log T \sum_{i \in a_B} \frac{1}{\Delta_{i,min}},
\end{align*}
giving
\begin{align*}
\mathcal{R}_{\mathsf{CVaR}_\alpha}(T) \leq C \frac{L^3}{\alpha^4} \log T \sum_{i \in a_B} \frac{1}{\Delta_{i,min}} + \left( 1 + \frac{\pi^2}{3} \right) k \Delta_{max}.
\end{align*}

\section{Proof of Theorem \ref{thm:regret_bounded_approximate}}

\subsection{Discretization}

Let $f_i \;($and $F_i), i \in [L]$ be $L$ probability mass functions (and cumulative distribution functions) corresponding to random variables $X_i$. Let $f' \; ($and $F'_i)$ be the corresponding distributions obtained by discretizing the support to points spaced $\epsilon$ apart by rounding up, resulting in corresponding random variables $X_i'$. That is, the probability mass (or density) at each point $x$ in the support of $f_i$ is shifted to $\ceil{\frac{x}{\epsilon}} \epsilon$.

This may result in multiple points being shifted to the same multiple of $\epsilon$, but for the sake of clarity, without any loss in generality, we keep track of all the original points (and their probability masses) individually even after merging. Let $'$ denote a distribution that has been snapped onto the $\epsilon-$grid this way. Now, we wish to study the relation between the distributions of $\sum_{i \in [L]} X_i$ and $\sum_{i \in [L]} X_i'$. (It should be noted if each of $L$ random variables are supported on multiples of $\epsilon$, their sum is supported on multiples of $\epsilon$ as well.)

Let $x_1, x_2, \dots x_L$ be points in the support of $f_1, f_2, \dots, f_L$ respectively. Then $\sum_{i \in [L]} x_i$ is in the support of $\bigoplus_{i \in [L]} f_i$, with probability mass $\prod_{i \in [L]} f_i(x_i)$ due to these $L$ specific points. Other combinations of points may contribute to this probability mass as well, but we will keep track of each individual contribution for the sake of clarity. Let $x_1', x_2', \dots x_L'$ be the points obtained by $\epsilon$-rounding up the corresponding points. These points are in the support of $f_1', f_2', \dots, f_L'$ respectively, and $\sum_{i \in [L]} x_i'$ is in the support of $\sum_{i \in [L]} X_i'$. Now,
\begin{align*}
\sum_{i \in [L]} x_i' - \sum_{i \in [L]} x_i & = \sum_{i=1}^L \ceil*{\frac{x_i}{\epsilon}} \epsilon - \sum_{i=1}^L x_i \\
	& = \left[ \sum_{i=1}^L \ceil*{\frac{x_i}{\epsilon}} - \left( \sum_{i=1}^L \frac{x_i}{\epsilon} \right) \right] \epsilon \\
	& < \left[ \sum_{i=1}^L \ceil*{\frac{x_i}{\epsilon}} - \left( \ceil*{\sum_{i=1}^L \frac{x_i}{\epsilon}} - 1 \right) \right] \epsilon \\
	& \leq \left[ \sum_{i=1}^L \ceil*{\frac{x_i}{\epsilon}} - \left( \ceil*{\sum_{i=1}^{L-1} \frac{x_i}{\epsilon}} + \ceil*{\frac{x_L}{\epsilon}} - 1 - 1 \right) \right] \epsilon \\
	& \leq \left[ \sum_{i=1}^L \ceil*{\frac{x_i}{\epsilon}} - \left( \ceil*{\sum_{i=1}^{L-2} \frac{x_i}{\epsilon}} + \sum_{i=L-1}^L \left( \ceil*{\frac{x_i}{\epsilon}} - 1 \right) - 1 \right) \right] \epsilon \\
	& \leq \left[ \sum_{i=1}^L \ceil*{\frac{x_i}{\epsilon}} - \left( \ceil*{\sum_{i=1}^{L-3} \frac{x_i}{\epsilon}} + \sum_{i=L-2}^L \left( \ceil*{\frac{x_i}{\epsilon}} - 1 \right) - 1 \right) \right] \epsilon \\
	& : \\
	& : \\
	& \leq \left[ \sum_{i=1}^L \ceil*{\frac{x_i}{\epsilon}} - \left( \ceil*{\sum_{i=1}^{L-j} \frac{x_i}{\epsilon}} + \sum_{i=L-j+1}^L \left( \ceil*{\frac{x_i}{\epsilon}} - 1 \right) - 1 \right) \right] \epsilon \\
	& : \\
	& : \\
	& \leq \left[ \sum_{i=1}^L \ceil*{\frac{x_i}{\epsilon}} - \left( \sum_{i=1}^L \left( \ceil*{\frac{x_i}{\epsilon}} - 1 \right) - 1 \right) \right] \epsilon \\
	& = (L + 1) \epsilon.
\end{align*}
In other words, when the points of each of $L$ individual discrete distributions are snapped onto an $\epsilon-$grid by rounding up, then the resultant sum of the distributions is no further than distance $(L+1) \epsilon$ to the right of the original sum of the distributions.

Now, we need to consider how this discretization affects the $\mathsf{CVaR}$ of the distributions. We have the following definition of $\mathsf{CVaR}$ that holds for discrete distributions:

$$
\mathsf{CVaR}_\alpha(X) = \frac{1}{\alpha} \left[ \sum_{x \leq x_\alpha} x f_X(x) - \left( \sum_{x \leq x_\alpha} f_X(x) - \alpha \right) x_\alpha \right],
$$
where
$$
x_\alpha = \mathsf{VaR}_\alpha(X) = \inf \{ x | F_X(x) \geq \alpha \}.
$$

Now, we have to consider the $\mathsf{CVaR}$ of some distribution $X'$ where some points in the support of $X'$ have been moved to the right by a distance no more than $\epsilon$. (We can include the multiplicative factor $L+1$ inside $\epsilon$).

Since the probability mass shifted rightwards, $v'_\alpha \geq v_\alpha$, but since the shift was atmost $\epsilon$, we have $v'_\alpha < v_\alpha + \epsilon$.

Now, let us divide the points in the support of $X$ based on how they affect the $\mathsf{CVaR}$ of $X$ and $X'$ (before and after the shift). Points contribute to the $\mathsf{CVaR}$ iff they are not more than $v'_\alpha$. So we can divide each point into 4 sets, depending on whether they contributed to the $\mathsf{CVaR}$ of $X$ and/or the $\mathsf{CVaR}$ of $X'$. (When we refer to a point here, that point may contribute only a part of the probability mass ``at that point", since many combinations of sums can end up being the same value. This does not affect the analysis because the contribution from each ``partial" point add up in the definition of $\mathsf{CVaR}$).

~\\
$A$: Points that contribute to the $\mathsf{CVaR}$ of $X$

~\\
$B$: Points that contribute to the $\mathsf{CVaR}$ of $X'$

~\\
So, the set of all points is $(A \setminus B) \cup (B \setminus A) \cup (A \cap B) \cup (A^c \cap B^c)$.

Considering the contributions of $A \cap B$,

\begin{align*}
\sum_{x \in A \cap B} x' f_{X'} (x') - x'_\alpha \sum_{x \in A \cap B} f_{X'} (x') & \leq \sum_{x \in A \cap B} \left( (x + \epsilon) f_{X} (x) \right) - x_\alpha \sum_{x \in A \cap B} f_{X'} (x') \\
	& = \sum_{x \in A \cap B} x f_{X} (x) - x_\alpha \sum_{x \in A \cap B} f_{X} (x) + \epsilon \sum_{x \in A \cap B} f_X(x).
\end{align*}

The points in $B \setminus A$, those that contribute to $X'$ but not $X$, are those that are not greater than $x'_\alpha$, but are greater than $x_\alpha$. Their contribution is

\begin{align*}
\sum_{x > x_\alpha, x' \leq x'_\alpha} \left( x' f_{X'}(x') \right) - x'_\alpha \sum_{x > x_\alpha, x' \leq x'_\alpha} f_{X'} (x') & \leq x'_\alpha \sum_{x > x_\alpha, x' \leq x'_\alpha} \left( f_{X'}(x') \right) - x'_\alpha \sum_{x > x_\alpha, x' \leq x'_\alpha} f_{X'} (x') \\
	& = 0.
\end{align*}

\begin{align*}
\mathsf{CVaR}_\alpha(X') & = \frac{1}{\alpha} \left[ \sum_{x' \leq x'_\alpha} x' f_X'(x') - \left( \sum_{x' \leq x'_\alpha} f_X'(x') - \alpha \right) x'_\alpha \right] \\
	& = \frac{1}{\alpha} \left[ \sum_{x \in A \cap B} x' f_X'(x') + \sum_{x \in B \setminus A} x' f_X'(x') - \left( \sum_{x' \in A \cap B} f_X'(x') + \sum_{x' \in B \setminus A} f_X'(x') - \alpha \right) x'_\alpha \right] \\
	& \leq \frac{1}{\alpha} \left[ \sum_{x \in A \cap B} x f_X(x) - x_\alpha \sum_{x \in A \cup B} f_X'(x') - \alpha x'_\alpha + \epsilon \sum_{x \in A \cap B} f_X(x) \right] \\
	& \qquad \qquad \qquad \qquad \qquad \qquad \qquad \qquad \text{ (from the previous two equations)} \\
	& = \frac{1}{\alpha} \left[ \sum_{x \in A} x f_X(x) - x_\alpha \sum_{x \in A} f_X'(x') - \alpha x'_\alpha - \sum_{x \in A \setminus B} x f_X(x) + x_\alpha \sum_{x \in A \setminus B} f_X'(x') + \epsilon \sum_{x \in A \cap B} f_X(x) \right]
\end{align*}

Now, for $x \in A \setminus B$, $x + \epsilon \geq x' > x'_\alpha$, so $-x < - x'_\alpha + \epsilon$, which gives
\begin{align*}
\mathsf{CVaR}_\alpha(X') & \leq \frac{1}{\alpha} \bigg[ \sum_{x \in A} x f_X(x) - x_\alpha \sum_{x \in A} f_X'(x') - \alpha x'_\alpha - x'_\alpha \sum_{x \in A \setminus B} f_X(x) \\
& \qquad \qquad + x_\alpha \sum_{x \in A \setminus B} f_X'(x') + \epsilon \sum_{x \in A \cap B} f_X(x) + \epsilon \sum_{x \in A \setminus B} f_X(x) \bigg] \\
& \leq \frac{1}{\alpha} \left[ \sum_{x \in A} x f_X(x) - x_\alpha \sum_{x \in A} f_X(x) - \alpha x_\alpha \right] + \frac{\epsilon}{\alpha} \sum_{x \in A} f_X(x) \\
	& \leq \mathsf{CVaR}_\alpha(X) + \frac{\epsilon}{\alpha}.
\end{align*}
Therefore, if we discretize the CDF $\tilde{F}_i$ of each arm $i$ onto an $\epsilon-$grid, the $\mathsf{CVaR}$ of each super arm $a$ satisfies:
$$
\mathsf{CVaR}_\alpha(\tilde{F}_a) \leq CVaR_\alpha(F'_a) \leq \mathsf{CVaR}_\alpha(\tilde{F}_a) + \frac{\epsilon (L+1)}{\alpha}.
$$

\subsection{Regret}
We proceed similar to the proof of Theorem \ref{thm:regret_bounded}, rewriting the regret as
\begin{align*}
\mathcal{R}_{\mathsf{CVaR}_\alpha}(T) & = \mathbb{E} \left[ \sum_{t=1}^T \mathbbm{1} \{ \Delta_{a_t} \neq 0 \} \Delta_{a_t} \right] \\
	& = \mathbb{E} \left[ \sum_{t=1}^T \mathbbm{1} \{ \mathcal{E}_t \} \Delta_{a_t} \right] + \mathbb{E} \left[ \sum_{t=1}^T \mathbbm{1} \{ \neg \mathcal{E}_t \} \Delta_{a_t} \right],
\end{align*}
where $a_t$ is the arm chosen by the policy $\pi$ at time $t$, $\Delta_a = \mathsf{CVaR}_\alpha(a^*) - \mathsf{CVaR}_\alpha(a)$ is $\mathsf{CVaR}$ gap, and
$$\mathcal{E}_t = \left\{ \exists \; j \in [K] \text{ s.t } \sup_{x \in [0,1]} | \hat{F}_{j, m_j(t-1)}(x) - F_j(x) | \geq C_{j, t-1, T_{j, t-1}} \right\}$$
is the ``bad" event that the empirical distribution and true distribution of some arm are not close at some point on their domain.

For the first term of the regret, we know from the previous analysis that, for $C_{i,t-1,m_i(t-1)} = \sqrt{\frac{3\log (t)}{2T_{i,t-1}}}$,
\begin{align*}
\mathbb{P} \left( \mathcal{E}_t \right) \leq \frac{2K}{t^2}.
\end{align*}
For the second term in the regret, since $\mathcal{E}_t$ does not occur, we have
$$
\tilde{F}_a(x) < F_a(x) < \tilde{F}_a(x) + 2 \sum_{j \in a} C_{j, t-1, T_{j, t-1}},
$$
and hence
$$
\mathsf{CVaR}(F_a) < \mathsf{CVaR}(\tilde{F}_a) < \mathsf{CVaR}(F_a) + Lw\left( 2 \sum_{j \in a} C_{j, t-1, T_{j,t-1}} \right).
$$
Now, since a suboptimal super arm has been chosen based on the discretized distributions $F'_a$,
\begin{align*}
\Delta_{a_t} & = \mathsf{CVaR}(F_{a^*}) - \mathsf{CVaR}(F_{a_t}) \\
	& < \mathsf{CVaR}(\tilde{F}_{a^*}) - \mathsf{CVaR}(\tilde{F}_{a_t}) + Lw\left( 2 \sum_{j \in a} C_{j, t-1, T_{j,t-1}} \right) \\
	& \leq \mathsf{CVaR}(F'_{a^*}) - \mathsf{CVaR}(F'_{a_t}) + \frac{\epsilon(L+1)}{\alpha} + Lw\left( 2 \sum_{j \in a_t} C_{j, t-1, T_{j,t-1}} \right) \\
	& \leq \frac{\epsilon(L+1)}{\alpha} + \phi_L^{-1}\left( 2 \sum_{j \in a_t} C_{j, t-1, T_{j,t-1}} \right),
\end{align*}
which implies either $\frac{\Delta_{a_t}}{2} \leq \phi_L^{-1}\left( 2 \sum_{j \in a_t} C_{j, t-1, T_{j,t-1}} \right)$ or $\frac{\Delta_{a_t}}{2} \leq \frac{\epsilon(L+1)}{\alpha}$. When $\frac{\Delta_{a_t}}{2} \leq \phi_L^{-1}\left( 2 \sum_{j \in a_t} C_{j, t-1, T_{j,t-1}} \right)$, we have
\begin{align*}
0 < \phi_L\left(\frac{\Delta_{a_t}}{2}\right) < 2 \sum_{i \in a_t} C_{i,t-1,T_{i,t-1}} = 2 \sum_{i \in a_t} \sqrt{\frac{3 \log t}{2 T_{i,t-1}}}.
\end{align*}
So, we have
\begin{align*}
\mathcal{R}_{\mathsf{CVaR}_\alpha}(T) & = \mathbb{E} \left[ \sum_{t=1}^T \mathbbm{1} \{ \Delta_{a_t} \neq 0 \} \Delta_{a_t} \right] \\
	& \leq K \Delta_{max} + \mathbb{E} \left[ \sum_{t=K+1}^T \mathbbm{1} \{ \neg \mathcal{E}_t \} \Delta_{a_t} \right] + \mathbb{E} \left[ \sum_{t=K+1}^T \mathbbm{1} \{ \mathcal{E}_t \} \Delta_{a_t} \right] \\
	& \leq K \Delta_{max} + \mathbb{E} \left[ \sum_{t=K+1}^T \mathbbm{1} \{ \neg \mathcal{E}_t \} \Delta_{a_t} \right] + \Delta_{max} \sum_{t=K+1}^T \mathbb{P} \{ \mathcal{E}_t \} \\
	& \leq K \Delta_{max} + \mathbb{E} \left[ \sum_{t=K+1}^T \mathbbm{1} \{ \neg \mathcal{E}_t \} \Delta_{a_t} \right] + \Delta_{max} \sum_{t=K+1}^T \frac{2k}{t^2} \\
	& \leq \left( 1 + \frac{\pi^2}{3} \right) k \Delta_{max} + \mathbb{E} \left[ \sum_{t=K+1}^T \mathbbm{1} \left\{ 0 < \phi_L(\Delta_{a_t}) < 2 \sum_{i \in a_t} \sqrt{\frac{3 \log t}{2 T_{i,t-1}}} \right\} \Delta_{a_t} \right] \\
	& \qquad \qquad \qquad \qquad \quad + \mathbb{E} \left[ \sum_{t=K+1}^T \mathbbm{1} \left\{ \frac{\Delta_{a_t}}{2} \leq \frac{\epsilon (L+1)}{\alpha} \right\} \Delta_{a_t} \right]
\end{align*}

From the proof of Theorem \ref{thm:regret_bounded}, it can be seen that, for some constant $C$,
\begin{align*}
    \mathbb{E} \left[ \sum_{t=K+1}^T \mathbbm{1} \left\{ 0 < \phi_L(\Delta_{a_t}) < 2 \sum_{i \in a_t} \sqrt{\frac{3 \log t}{2 T_{i,t-1}}} \right\} \Delta_{a_t} \right] \leq C L^3 b^2 \log T \sum_{i \in a_B} \frac{1}{\Delta_{i,min}}.
\end{align*}
Therefore,
\begin{align*}
\mathcal{R}_{\mathsf{CVaR}}(\pi) & \leq \left( 1 + \frac{\pi^2}{3} \right) k \Delta_{max} + C L^3 b^2 \log T \sum_{i \in a_B} \frac{1}{\Delta_{i,min}} + 2 \epsilon \frac{(L+1)T}{\alpha}.
\end{align*}
So, letting $\epsilon = \frac{\alpha}{(L+1)T}$ gives the regret
$$
\mathcal{R}_{\mathsf{CVaR}_\alpha} (T) \leq 2 + \left( 1 + \frac{\pi^2}{3} \right) k\Delta_{max} + C \frac{L^3}{\alpha^4} \log T \sum_{i \in a_B} \frac{1}{\Delta_{i,min}}.
$$
\end{document}